\definecolor{myblue}{HTML}{F0F8FF} 
\definecolor{mypurple}{HTML}{E6E6FA} 
\definecolor{mygreen}{HTML}{F0FFF0} 
\definecolor{mypink}{HTML}{FFF0F5}
\newcommand{\att}{\mathsf{attn}}
\newcommand{\mha}{\mathsf{mha}}
\newcommand{\sm}{\mathsf{softmax}}
\acrodef{mha}[MHA]{Multi-Head Attention}
\acrodef{ff}[FF]{Feed-Forward}
\acrodef{mle}[MLE]{Maximum Likelihood Estimate}
\DeclareMathAlphabet{\mathbsf}{OT1}{cmss}{bx}{n}
\DeclareMathAlphabet{\mathssf}{OT1}{cmss}{m}{sl}
\DeclareSymbolFont{bsfletters}{OT1}{cmss}{bx}{n}  
\DeclareSymbolFont{ssfletters}{OT1}{cmss}{m}{n}
\DeclareMathSymbol{\bsfGamma}{0}{bsfletters}{'000}
\DeclareMathSymbol{\ssfGamma}{0}{ssfletters}{'000}
\DeclareMathSymbol{\bsfDelta}{0}{bsfletters}{'001}
\DeclareMathSymbol{\ssfDelta}{0}{ssfletters}{'001}
\DeclareMathSymbol{\bsfTheta}{0}{bsfletters}{'002}
\DeclareMathSymbol{\ssfTheta}{0}{ssfletters}{'002}
\DeclareMathSymbol{\bsfLambda}{0}{bsfletters}{'003}
\DeclareMathSymbol{\ssfLambda}{0}{ssfletters}{'003}
\DeclareMathSymbol{\bsfXi}{0}{bsfletters}{'004}
\DeclareMathSymbol{\ssfXi}{0}{ssfletters}{'004}
\DeclareMathSymbol{\bsfPi}{0}{bsfletters}{'005}
\DeclareMathSymbol{\ssfPi}{0}{ssfletters}{'005}
\DeclareMathSymbol{\bsfSigma}{0}{bsfletters}{'006}
\DeclareMathSymbol{\ssfSigma}{0}{ssfletters}{'006}
\DeclareMathSymbol{\bsfUpsilon}{0}{bsfletters}{'007}
\DeclareMathSymbol{\ssfUpsilon}{0}{ssfletters}{'007}
\DeclareMathSymbol{\bsfPhi}{0}{bsfletters}{'010}
\DeclareMathSymbol{\ssfPhi}{0}{ssfletters}{'010}
\DeclareMathSymbol{\bsfPsi}{0}{bsfletters}{'011}
\DeclareMathSymbol{\ssfPsi}{0}{ssfletters}{'011}
\DeclareMathSymbol{\bsfOmega}{0}{bsfletters}{'012}
\DeclareMathSymbol{\ssfOmega}{0}{ssfletters}{'012}
\theoremstyle{plain}
\newtheorem{theorem}{Theorem}[section]
\newtheorem{proposition}[theorem]{Proposition}
\theoremstyle{definition}
\theoremstyle{remark}
\title{\textsc{LongSpec}: Long-Context Lossless Speculative Decoding\\ with Efficient Drafting and Verification}
\renewcommand*{\thefootnote}{\fnsymbol{footnote}}
\author{
    \textbf{
    Penghui Yang\textsuperscript{2}\footnotemark[1],
    Cunxiao Du\textsuperscript{1}\footnotemark[1],
    Fengzhuo Zhang\textsuperscript{3},
    Haonan Wang\textsuperscript{3},} \\
    \textbf{Tianyu Pang\textsuperscript{1}, Chao Du\textsuperscript{1}, Bo An\textsuperscript{2}} \\[0.5ex]
    \textsuperscript{\rm 1} Sea AI Lab, 
    \textsuperscript{\rm 2} Nanyang Technological University,
    \textsuperscript{\rm 3} National University of Singapore\\
    \texttt{phyang.cs@gmail.com, cnsdunm@gmail.com, fzzhang@u.nus.edu}
}
\begin{document}
\maketitle
\footnotetext[1]{Equal contribution. Work done during Penghui Yang's associate membership at Sea AI Lab and when Cunxiao Du was at Sea AI Lab.}
\renewcommand*{\thefootnote}{\arabic{footnote}}

\begin{abstract}

As Large Language Models (LLMs) can now process extremely long contexts, efficient inference over these extended inputs has become increasingly important, especially for emerging applications like LLM agents that highly depend on this capability. Speculative decoding (SD) offers a promising lossless acceleration technique compared to lossy alternatives such as quantization and model cascades. However, most state-of-the-art SD methods are trained on short texts (typically fewer than 4k tokens), making them unsuitable for long-context scenarios. Specifically, adapting these methods to long contexts presents three key challenges: (1) the excessive memory demands posed by draft models due to large Key-Value (KV) cache; (2) performance degradation resulting from the mismatch between short-context training and long-context inference; and (3) inefficiencies in tree attention mechanisms when managing long token sequences. This work introduces \textsc{LongSpec}, a framework that addresses these challenges through three core innovations: a memory-efficient draft model with a constant-sized KV cache; novel position indices that mitigate the training–inference mismatch; and an attention aggregation strategy that combines fast prefix computation with standard tree attention to enable efficient decoding. Experimental results confirm the effectiveness of \textsc{LongSpec}, achieving up to a 3.26$\times$ speedup over strong \texttt{Flash Attention} baselines across five long-context understanding datasets, as well as a 2.34$\times$ reduction in wall-clock time on four math reasoning tasks with the QwQ model, demonstrating significant latency improvements for long-context applications.

\end{abstract}

\section{Introduction}
\label{sec:intro}

\begin{figure}[ht]
  \begin{center}
    \includegraphics[width=\linewidth]{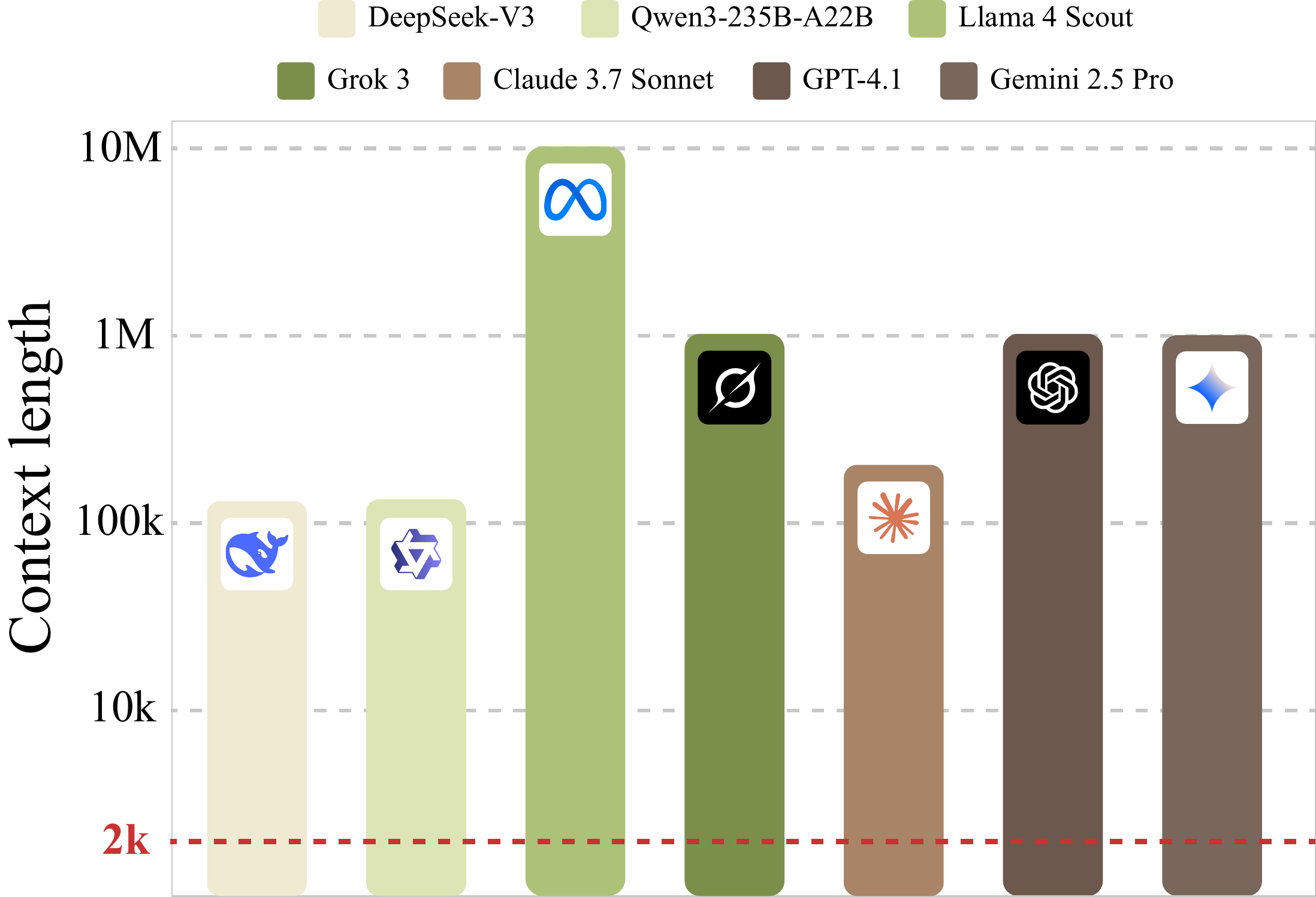}
  \end{center}
  \caption{The SoTA SD method, EAGLE, has a training context length of 2048, which is significantly shorter than the context lengths of modern LLMs.}
  \label{fig:teaser}
  \vspace{-.2cm}
\end{figure}


Large Language Models (LLMs) have demonstrated remarkable capabilities~\cite{achiam2023gpt}, and their ability to handle extensive contexts is becoming crucial for emerging applications such as LLM agents and long reasoning tasks~\cite{tan2025cradle, guo2025deepseek}, which now operate over context windows extending to millions of tokens~\cite{team2024gemini}. In these demanding long-context scenarios, the high inference latency of standard autoregressive decoding becomes a pronounced bottleneck. While various acceleration techniques such as quantization~\cite{lin2024awq}, sparse attention~\cite{li2024snapkv}, and model cascades~\cite{gupta2024language} have been proposed to mitigate this problem, they often compromise the output quality, rendering them lossy solutions. In contrast, speculative decoding (SD)~\cite{leviathan2023fast} offers a \textbf{lossless} acceleration strategy by using a smaller draft model to propose token sequences, which are then verified in parallel by the larger target model. However, state-of-the-art (SoTA) SD methods~\cite{li2024eagle}, which often rely on a small and standalone draft model, are mainly designed and evaluated on short-context data, typically with sequences shorter than 4k tokens (see Figure \ref{fig:teaser}). Although some existing SD methods can be extended to longer contexts, they often use the full target model with a compressed Key-Value (KV) cache as the draft model~\cite{chen2024magicdec, tiwari2025quantspec}. These approaches avoid the overhead of training a dedicated draft model, but their reliance on full target models, which are not sufficiently lightweight, limits the speed of draft generation. As a result, these methods may underperform compared to SoTA short-context SD techniques.
This divergence raises a critical question: 

\begin{center}
\textit{Why cannot SoTA SD methods for short contexts be directly applied to long sequences?}
\end{center}




In response to this question, we attribute the difficulty of directly adapting effective short-context SoTA SD techniques to long-context settings to three emergent challenges:
\begin{enumerate}[leftmargin=0.5cm,topsep=-1pt]

\item \textbf{Architecture:} In SoTA SD methods (\emph{e.g.}, EAGLE~\cite{li2024eagle}), the draft model’s KV cache still grows linearly with context length. This linear growth becomes a prohibitive memory bottleneck as the context length increases.

\item \textbf{Training:} Language model training typically relies on plentiful short-sequence data, while long-sequence data remains relatively scarce. The imbalance of training data makes it difficult for the model to generalize to longer contexts. To address this, conventional wisdom in training long-context LLMs employs length extrapolation, in particular by extending the Rotary Position Embedding (RoPE)~\cite{su2024roformer} base to accommodate longer contexts \cite{gao2024train, liu2024scaling, peng2024yarn}.
 However, this solution is not directly applicable to SoTA SD draft models, because their RoPE base must match that of the target model\footnote{SoTA SD techniques often require the draft model to utilize intermediate features (\emph{e.g.}, hidden states or KV cache) from the target model, which is crucial for providing richer information from the target model, enabling the draft model to better align with and predict the target model's outputs. See more explanations in Appendix~\ref{sec:defend}.}, which is fixed and already scaled for long-context scenarios.


\item \textbf{Inference:} The effectiveness of tree attention verification~\cite{miao2024specinfer, cai2024medusa} diminishes in long-context scenarios. In particular, common inference optimizations for long-context scenarios are primarily designed to handle regular, structured attention masks and are not optimized for arbitrary or unstructured attention masks. As a result, potential speedups from speculation may be lower than expected.

\end{enumerate}

To address these challenges, we introduce \textsc{LongSpec}, a comprehensive framework for efficient long-context lossless speculative decoding. \textsc{LongSpec} overcomes the aforementioned obstacles through three key innovations:
\begin{enumerate}[leftmargin=0.5cm,topsep=-1pt]
    \item \textbf{Memory-Efficient Architecture.} We propose a draft model architecture with constant memory usage regardless of context length, effectively resolving the scalability limitations of prior SoTA autoregressive draft models.
    \item \textbf{Effective Training Regimes.} We develop a novel training strategy involving Anchor-Offset Indices, enabling draft models trained on short sequences to robustly generalize to much longer contexts at inference time.
    \item \textbf{Fast Tree Attention.} We introduce Hybrid Tree Attention, a new computation method that significantly speeds up tree verification by decomposing attention calculations and leveraging optimized Triton kernels.
\end{enumerate}

Experiments on five long-context understanding datasets using five LLMs as target models show that our \textsc{LongSpec} can significantly reduce the long-context inference latency, achieving up to a 3.26$\times$ speedup over strong baselines with \texttt{Flash Attention}\footnote{In this paper, \texttt{Flash Attention} refers to the inference optimization technique \texttt{FlashDecoding}~\cite{dao2023flash}, implemented via the \texttt{flash\_attn\_with\_kvcache} function from the Flash Attention library~\cite{dao2022flash}.}, and up to a  7$\times$ speedup over common baselines using the HuggingFace implementation.
Additional experiments on four math reasoning datasets with the long reasoning model QwQ~\cite{qwen2024qwq} further validate the effectiveness of \textsc{LongSpec}, yielding a 2.34$\times$ speedup in wall-clock time. Furthermore, our proposed Anchor-Offset Indices enable models to reach the same loss level 3.93$\times$ faster, and our Hybrid Tree Attention reduces attention computation latency by approximately 75\% compared to the standard HuggingFace implementation.

\section{Related Work}
\label{sec:related}

Speculative decoding offers a promising approach to accelerating LLMs without compromising the quality of their outputs\footnote{While this paper focuses on original speculative decoding methods which are lossless, some recent works explore lossy speculative decoding (see Appendix \ref{sec:lossy_related} for a brief overview).}. Early efforts~\cite{xia2023speculative, leviathan2023fast, liu2024online, bae2023fast, liu2025pearl} rely on existing smaller LLMs to generate draft sequences. Some other methods aim to improve upon those early efforts~\cite{sun2023spectr, miao2024specinfer, chen2024cascade}. There are also some works using part of the target model as the draft model~\cite{liu2024kangaroo, zhang2024draft, elhoushi2024layerskip, xia2025swift}. Retrieval-based speculative decoding methods~\cite{fu2024break, he2024rest, zhao2024ouroboros, liu2025logitspec, shen2026double} offer an alternative by utilizing $N$-gram matching rather than relying on smaller models. These approaches bypass the need for additional model training, leveraging pre-existing data patterns to construct draft sequences efficiently. 

More recent advancements~\cite{cai2024medusa, li2024eagle, du2024glide, huang2025jakiro} have expanded on these foundations by designing specialized draft models and introducing tree speculation and verification techniques. These methods leverage customized draft models tailored for speculative decoding, achieving higher efficiency and performance. Additionally, the tree-based approaches employed in these methods allow for more adaptive and parallelizable decoding processes, paving the way for broader applications in real-world systems, including vision-language models~\citep{huang2025specvlm}.

Although speculative decoding has progressed significantly for conventional context lengths, only a few existing papers focus on lossless speculative decoding in long-context scenarios. TriForce~\cite{sun2024triforce} introduces a three-layer speculative decoding system that is scalable for long sequence generation. MagicDec~\cite{chen2024magicdec} uses speculative decoding to improve both the throughput and latency of LLM inference. QuantSpec~\cite{tiwari2025quantspec} employs a hierarchical 4-bit quantized KV
cache and 4-bit quantized weights for draft models. However, these methods mainly utilize the target model with the sparse KV cache as the draft model. The computation-intensive draft models restrict the practical usage of these methods when facing various batch sizes. In contrast, our work focuses on efficiently building a draft model with only one transformer block, achieving more effective performance across different scenarios.

\section{Methodology}

In this section, we present our framework \textsc{LongSpec} for Long-Context Speculative Decoding, which addresses three key challenges by
(1) designing a lightweight draft model architecture with constant-sized memory overhead,
(2) devising the training strategy with anchor-offset indices to handle long contexts effectively,
and (3) implementing a fast attention aggregation mechanism that leverages tree-based speculation and verification for practical usage.

\begin{figure*}[ht]
    \centering
    \includegraphics[width=0.9\linewidth]{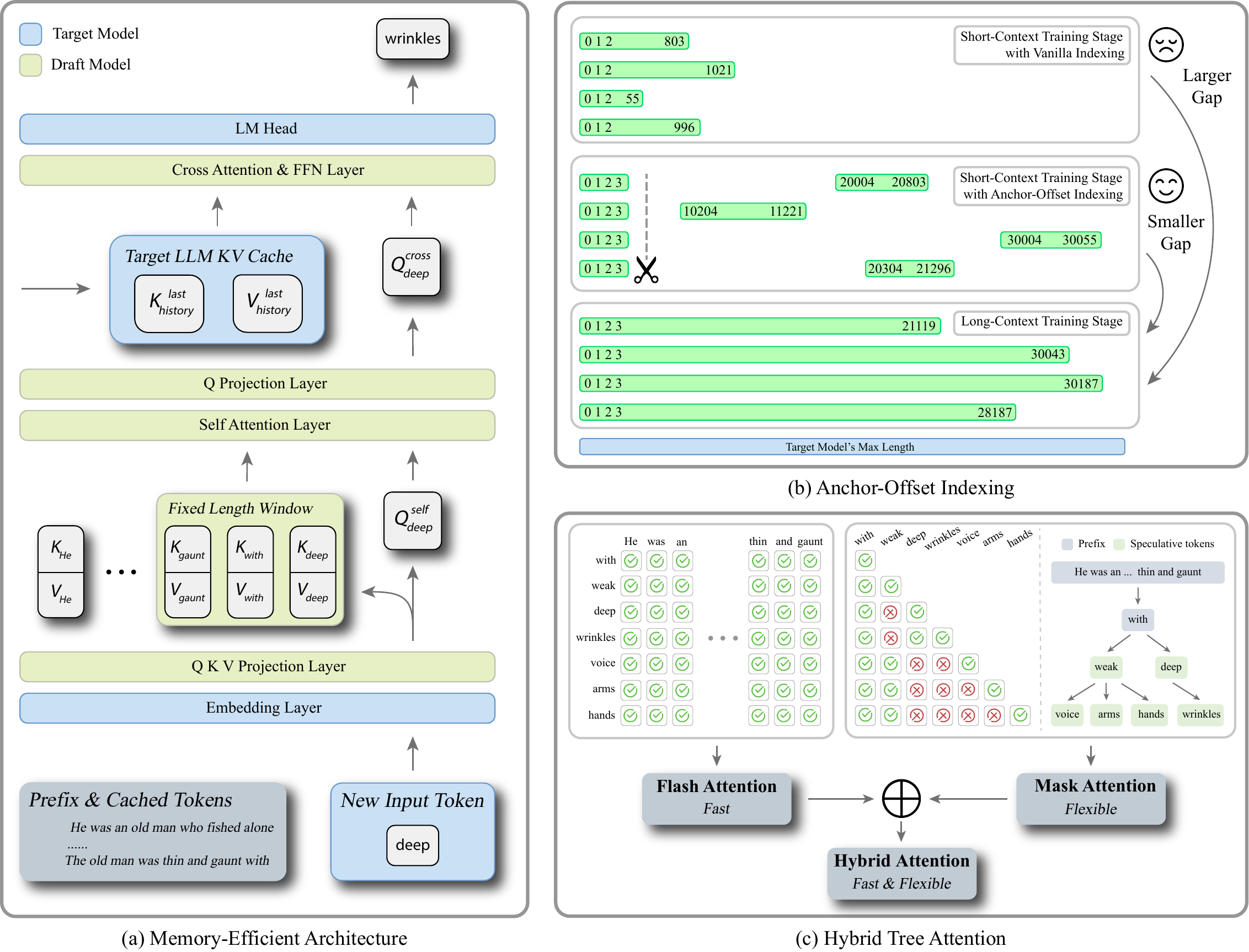}
    \caption{Illustration of the memory-efficient draft model, the Anchor-Offset Indices, and the Hybrid Tree Attention. (a) We use a sliding window self-attention layer to capture the local context information and a cross-attention layer to gather long-context information. (b) The differences between the vanilla indexing and the Anchor-Offset Indices. By introducing a randomly selected offset and some anchor indices, the Anchor-Offset Indices enable the short-context training stage to seamlessly integrate with the long-context training stage.
    (c) The Hybrid Tree Attention combines the advantages of \texttt{Flash Attention} and our Triton-implemented Attention.}
    \label{fig:main}
\end{figure*}

\subsection{Memory-Efficient Architecture}
\label{subsec: arch}

In previous work, the success of the SoTA model EAGLE depends on two key factors: (1) the hidden states provided by the target model, and (2) its autoregressive structure. However, an autoregressive draft model inevitably requires maintaining its own KV cache, which introduces additional overhead during long-context inference and demands substantial GPU memory, especially for tasks such as LLM agents and long reasoning that involve producing large amounts of output.

To avoid this extra memory overhead, we propose a draft model with constant memory usage independent of context length. As illustrated in Figure~\ref{fig:main}(a), our model consists of two components: the self-attention module and the following cross-attention module. The self-attention module focuses on modeling local context, while the cross-attention module captures long-range dependencies. To restrict memory usage, we apply a sliding-window attention mechanism to the self-attention module, a technique widely adopted in modern LLMs~\cite{beltagy2020longformer}. Hence, during inference, the self-attention does not exceed the window size, which we set to 512. 


For the cross-attention component, inspired by GliDe~\cite{du2024glide}, we leverage the KV cache of the target model (see Appendix~\ref{sec:defend} for a detailed explanation of how this benefits the draft model). This design not only enables better modeling of previous information but also completely removes additional storage overhead for long contexts, since the large model’s KV cache must be stored regardless of whether or not speculative decoding is employed.
Different from GliDe, we also share the weights of the Embedding Layer and LM Head between the target and draft models, which substantially reduces memory consumption for large-vocabulary LLMs such as LLaMA-3 (vocabulary size: 128,256)~\cite{dubey2024llama} and Qwen-2.5 (vocabulary size: 152,064)~\cite{yang2024qwen2}.

\subsection{Effective Training Regimes}
\label{subsec: Training}
\textbf{Anchor-Offset Indices.}
With vanilla position indices, which consist of successive integers starting from $0$, those indices appearing earlier in sequences occur more frequently than larger position indices~\cite{an2025does}, as shown in Figure~\ref{fig:main}(b) upper part.
Consequently, larger position indices receive insufficient training updates, which leads to a training-inference discrepancy.
As we point out in Section~\ref{sec:intro}, the common RoPE-based extrapolation cannot be directly used here because the RoPE base is fixed once the target model is chosen.
To leverage the target model’s KV cache, our draft model must keep the RoPE base the same as the target model.
To tackle this challenge, we can only leverage carefully designed indices. These indices must ensure that (1) the position indices in the draft model can be sufficiently trained using short-context data and (2) the indices would not cause the target model to exhibit out-of-distribution behavior because the target model shares the same indices as the draft model during training.

To satisfy these constraints, we propose the Anchor-Offset Indices strategy. Specifically, we reserve the first four positions $[0,1,2,3]$ as \textit{attention sink} tokens\footnote{According to \cite{xiao2024efficient}, LLM exhibits an \textit{attention sink} phenomenon when dealing with long texts, which means the attention weights primarily concentrate on the first four tokens and the recent tokens.}, then assign all subsequent tokens to large consecutive indices starting at a random offset (\emph{e.g.}, $[0,1,2,3,8192,8193,8194,\dots]$). By exploiting the \textit{attention sink} phenomenon, we believe that utilizing Anchor-Offset Indices can naturally lead the target model to exhibit in-distribution behavior. The anchor indices and random offset ensure that every position index can be sufficiently trained, addressing the limitation of the vanilla one that repeatedly trains only smaller indices. In our experiments, adopting these indices in the target model only increases the loss by approximately 0.001, indicating that the target model is indeed well-suited to such changes. Pseudo code can be found in Appendix~\ref{appendix:pseudo-code}.

\textbf{Flash Noisy Training.} During training, our draft model leverages the KV cache from a large model, while this KV cache is not always visible during inference. 
This is because the large model only updates its KV cache upon verification completion. Concretely, for the $t$-th cross-attention query \(Q_t\) in the draft model, we can only guarantee access to the corresponding key-value states \(K_{<t'}\), \(V_{<t'}\) satisfying
$ 1\le|t' - t|<\gamma$,
where \(\gamma\) is the number of speculative steps.

To ensure consistency between training and inference, a straightforward solution would be to add an attention mask~\cite{du2024glide}. However, this method is incompatible with \texttt{Flash Attention}, which would significantly degrade training speed and cause prohibitive memory overhead, particularly in long-context training scenarios. Therefore, we propose a technique called \textbf{flash noisy training}. During training, we randomly shift the indices of queries and key-value states with $1 \le j < \gamma$. 
Suppose the sequence length is $l$, then we compute
\[
    O_{\geq j} 
    = \att \bigl(Q_{\geq j}, \,K_{< l-j}, \,V_{< l-j}\bigr).
\]
In this way, we effectively simulate the same visibility constraints as in the inference phase, \emph{i.e.}, 
$ 1\le|t' - t|<\gamma$,
thereby aligning the behavior at training time with the inference behavior. When using Flash Noisy Training, we observe a 14.7\% increase in acceptance length compared to training without it, with improvements most concentrated on the final speculated tokens. This highlights its role in mitigating the training-inference gap. Pseudo code can be found in Appendix~\ref{appendix:pseudo-code}.

\subsection{Fast Tree Attention}

\label{subsec: tree attention}
Tree Speculative Decoding~\cite{miao2024specinfer} leverages speculation trees and the causal structure of LLMs so that a draft model can propose multiple candidate sequences, while the target model only needs to verify them once, without altering the final results. 
In this process, \emph{Tree Attention} plays a key role in ensuring both correctness and efficiency. Early works~\citep{cai2024medusa, li2024eagle} apply attention masks derived from prefix trees to the $QK^\mathsf{T}$ attention matrix, thus disabling wrong combinations between speculation tokens.
However, these methods only run on PyTorch’s eager execution mode, precluding more advanced attention kernels such as \texttt{Flash Attention}.
As a result, the inference speed decreases significantly when the sequence length increases.

To address these performance bottlenecks, we propose a \textbf{Hybrid Tree Attention} mechanism, as illustrated in Figure~\ref{fig:main}(c). Our method is based on two key insights: 1) when performing Tree Attention, as illustrated in the left part of Figure~\ref{fig:main}(c), the queries and the cached key-value pairs $\{K_{\mathrm{cache}}, V_{\mathrm{cache}}\}$ do not require additional masks; 2) only the queries and the key-value pairs $\{K_{\mathrm{specs}}, V_{\mathrm{specs}}\}$ from the current speculative tokens need masking as illustrated in the right part of Figure~\ref{fig:main}(c), and the number of such speculative tokens is typically small.
Based on these observations, we adopt a divide and aggregate approach that splits the attention computation into two parts and merges them afterward.

\textbf{Splitting Key-Value Pairs.}
We partition all key-value pairs into two groups: $\{K_{\mathrm{cache}}, V_{\mathrm{cache}}\}$: the cached part of the main sequence, which requires no attention mask;
and  $\{K_{\mathrm{specs}}, V_{\mathrm{specs}}\}$: the speculation-stage part, which needs attention masks. For $\{K_{\mathrm{cache}}, V_{\mathrm{cache}}\}$, we invoke the efficient \texttt{Flash Attention} kernel. For $\{K_{\mathrm{specs}}, V_{\mathrm{specs}}\}$, we use our custom Triton kernel \texttt{fused\_mask\_attn}, which applies blockwise loading and masking in the KV dimension, enabling fast computation of attention.
This step yields two sets of attention outputs $\{O_{\mathrm{cache}}, O_{\mathrm{specs}}\}$ along with their corresponding denominators (\emph{i.e.}, log-sum-exp of all attention scores) $\{\mathrm{LSE}_{\mathrm{cache}}, \mathrm{LSE}_{\mathrm{specs}}\}$.

\textbf{Aggregation.}
We then combine these two parts into the final attention output $O_{\mathrm{merge}}$ via a log-sum-exp trick. First, we compute
\vspace{-.1cm}
\[
\begin{aligned}
\mathrm{LSE}_{\mathrm{merge}}
&= \log\Bigl(\exp\bigl(\mathrm{LSE}_{\mathrm{cache}}\bigr) + \exp\bigl(\mathrm{LSE}_{\mathrm{specs}}\bigr)\Bigr),
\end{aligned}
\]
\vspace{-.1cm}
and then apply a weighted summation to the two outputs:
\vspace{-.1cm}
\[
\begin{aligned}
O_{\mathrm{merge}}
=\; &O_{\mathrm{cache}} \cdot \exp\bigl(\mathrm{LSE}_{\mathrm{cache}} - \mathrm{LSE}_{\mathrm{merge}}\bigr)\\ +\; &O_{\mathrm{specs}} \cdot \exp\bigl(\mathrm{LSE}_{\mathrm{specs}} - \mathrm{LSE}_{\mathrm{merge}}\bigr).
\end{aligned}
\]
\vspace{-.1cm}
The theoretical guarantee is provided in Appendix~\ref{appendix:attn_aggr}.
As outlined above, this hybrid approach employs the highly efficient \texttt{Flash Attention} kernel for most of the computations in long-sequence inference and only uses a custom masking attention \texttt{fused\_mask\_attn} for the small number of speculative tokens. 
The kernel \texttt{fused\_mask\_attn} follows the design philosophy of \texttt{Flash Attention 2}~\cite{dao2023flash} by splitting \(Q\), \(K_{\text{specs}}\), and \(V_{\text{specs}}\) into small blocks. This strategy reduces global memory I/O and fully leverages GPU streaming multiprocessors. Furthermore, for each block in the computation of \(QK_{\text{specs}}^\top\), the mask matrix is loaded and used to apply the masking operation.
The Hybrid Tree Attention effectively balances the parallel verification of multiple branches with improved inference speed, all without compromising correctness.

\section{Experiments}

\subsection{Settings}

\textbf{Target and draft models.} 
We select four widely-used long-context LLMs, Vicuna~(including 7B and 13B)~\cite{chiang2023vicuna}, LongChat~(including 7B and 13B)~\cite{li2023longchat}, LLaMA-3.1-8B-Instruct~\cite{dubey2024llama}, and QwQ-32B~\cite{qwen2024qwq}, as target models.
In order to make the draft model and target model more compatible, our draft model is consistent with the target model in various parameters, such as the number of KV heads. 

\textbf{Training Process.} 
We first train our draft model with Anchor-Offset Indices on the SlimPajama-6B pretraining dataset~\cite{cerebras2023slimpajama}. 
The random offset is set as a random integer from 0 to 15k for Vicuna models and LongChat-7B, and 0 to 30k for the other three models because they have longer maximum context length.
Then we train our model on a small subset of the Prolong-64k long-context dataset~\cite{gao2024train} in order to gain the ability to handle long texts. 
Finally, we finetune our model on a self-built long-context supervised-finetuning~(SFT) dataset to further improve the model performance.
The position index of the last two stages is the vanilla indexing policy because the training data is sufficiently long.
We apply flash noisy training during all three stages to mitigate the training and inference inconsistency and the extra overhead of flash noisy training is negligible.
More details on model training can be found in Appendix~\ref{appendix:training_details}.

\begin{table*}[t]
\centering
\caption{Average acceptance length $\tau$, decoding speed (tokens/s), and speedups across different models and settings. Specifically, ``Vanilla HF'' refers to HuggingFace’s PyTorch-based attention implementation, while ``Vanilla FA'' employs \texttt{Flash Attention}. The speedup statistic calculates the acceleration ratio relative to the Vanilla HF method. For the analysis of the reasons for the low speedup ratio of MagicDec, see Section~\ref{sec:main_results} and \ref{sec:throughput}. All results are computed at $T=0$.}
\label{tab:final_table}
\resizebox{\linewidth}{!}{
\begin{tabular}{c c c c c c c c c c c c c c c c c}
\toprule
 & \multirow{2}{*}{\textbf{Setting}} 
& \multicolumn{3}{c}{\textbf{GovReport}} 
& \multicolumn{3}{c}{\textbf{QMSum}} 
& \multicolumn{3}{c}{\textbf{Multi-News}} 
& \multicolumn{3}{c}{\textbf{LCC}} 
& \multicolumn{3}{c}{\textbf{RepoBench-P}} \\
\cmidrule(lr){3-5} \cmidrule(lr){6-8} \cmidrule(lr){9-11} \cmidrule(lr){12-14} \cmidrule(lr){15-17}
& 
& $\tau$ & Tokens/s & Speedup
& $\tau$ & Tokens/s & Speedup
& $\tau$ & Tokens/s & Speedup
& $\tau$ & Tokens/s & Speedup
& $\tau$ & Tokens/s & Speedup \\
\midrule

\multirow{4}{*}{\rotatebox{90}{V-7B}}
& Vanilla HF  
& 1.00 & 25.25 & -
& 1.00 & 18.12 & -
& 1.00 & 27.29 & -
& 1.00 & 25.25 & -
& 1.00 & 19.18 & - \\

& Vanilla FA  
& 1.00 & 45.76 & 1.00$\times$
& 1.00 & 43.68 & 1.00$\times$
& 1.00 & 55.99 & 1.00$\times$
& 1.00 & 54.07 & 1.00$\times$
& 1.00 & 46.61 & 1.00$\times$ \\

& MagicDec    
& 2.23 & 41.68 & 0.91$\times$
& 2.29 & 42.91 & 0.98$\times$
& 2.31 & 44.82 & 0.80$\times$
& 2.52 & 46.96 & 0.87$\times$
& 2.57 & 48.75 & 1.05$\times$ \\

& PLD
& 2.20 & 73.91 & 1.62$\times$
& 1.22 & 39.08 & 0.89$\times$
& 2.15 & 72.31 & 1.29$\times$
& 2.43 & 78.41 & 1.45$\times$
& 2.23 & 74.15 & 1.59$\times$ \\

& \textbf{LongSpec}  
& \textbf{3.57} & \textbf{102.23} & \textbf{2.23}$\times$
& \textbf{3.14} & \textbf{88.87}  & \textbf{2.04}$\times$
& \textbf{3.51} & \textbf{100.55} & \textbf{1.80}$\times$
& \textbf{3.73} & \textbf{107.30} & \textbf{1.99}$\times$
& \textbf{3.86} & \textbf{110.76} & \textbf{2.38}$\times$ \\
\midrule

\multirow{4}{*}{\rotatebox{90}{V-13B}}
& Vanilla HF  
& 1.00 & 17.25 & -
& 1.00 & 11.86 & -
& 1.00 & 18.81 & -
& 1.00 & 17.25 & -
& 1.00 & 13.44 & - \\

& Vanilla FA  
& 1.00 & 28.52 & 1.00$\times$
& 1.00 & 27.43 & 1.00$\times$
& 1.00 & 35.01 & 1.00$\times$
& 1.00 & 33.87 & 1.00$\times$
& 1.00 & 29.14 & 1.00$\times$ \\

& MagicDec    
& 2.95 & 38.24 & 1.34$\times$
& 2.87 & 37.15 & 1.35$\times$
& 2.97 & 39.47 & 1.13$\times$
& 2.96 & 38.40 & 1.13$\times$
& 2.94 & 36.66 & 1.26$\times$ \\

& PLD
& 1.37 & 32.10 & 1.13$\times$ 
& 1.28 & 28.29 & 1.03$\times$ 
& 1.35 & 34.97 & 1.00$\times$ 
& 1.34 & 36.24 & 1.07$\times$ 
& 1.32 & 30.60 & 1.05$\times$ \\

& \textbf{LongSpec}  
& \textbf{3.31} & \textbf{71.08} & \textbf{2.49}$\times$
& \textbf{2.76} & \textbf{57.15} & \textbf{2.08}$\times$
& \textbf{3.44} & \textbf{78.20} & \textbf{2.23}$\times$
& \textbf{3.57} & \textbf{81.00} & \textbf{2.39}$\times$
& \textbf{3.59} & \textbf{77.22} & \textbf{2.65}$\times$ \\
\midrule

\multirow{4}{*}{\rotatebox{90}{LC-7B}}
& Vanilla HF  
& 1.00 & 25.27 & -
& 1.00 & 14.11 & -
& 1.00 & 27.66 & -
& 1.00 & 25.27 & -
& 1.00 & 17.02 & - \\

& Vanilla FA  
& 1.00 & 42.14 & 1.00$\times$
& 1.00 & 36.87 & 1.00$\times$
& 1.00 & 50.19 & 1.00$\times$
& 1.00 & 54.17 & 1.00$\times$
& 1.00 & 42.69 & 1.00$\times$ \\

& MagicDec    
& 2.26 & 41.90 & 0.99$\times$
& 2.20 & 40.82 & 1.11$\times$
& 2.32 & 43.94 & 0.88$\times$
& 2.77 & 51.73 & 0.96$\times$
& 2.57 & 44.13 & 1.03$\times$ \\

& PLD
& 2.10 & 68.66 & 1.63$\times$ 
& 1.24 & 36.58 & 0.99$\times$ 
& 2.00 & 67.66 & 1.35$\times$ 
& 2.48 & 85.62 & 1.58$\times$ 
& 2.71 & 89.22 & 2.09$\times$ \\

& \textbf{LongSpec}  
& \textbf{3.59} & \textbf{101.43} & \textbf{2.41}$\times$
& \textbf{3.06} & \textbf{85.23} & \textbf{2.31}$\times$
& \textbf{3.41} & \textbf{97.93} & \textbf{1.95}$\times$
& \textbf{4.21} & \textbf{122.30} & \textbf{2.26}$\times$
& \textbf{4.03} & \textbf{115.27} & \textbf{2.70}$\times$ \\
\midrule

\multirow{4}{*}{\rotatebox{90}{LC-13B}}
& Vanilla HF  
& 1.00 & 17.72 & -
& 1.00 & 12.08 & -
& 1.00 & 18.74 & -
& 1.00 & 17.72 & -
& 1.00 & 13.85 & - \\

& Vanilla FA  
& 1.00 & 28.56 & 1.00$\times$
& 1.00 & 27.18 & 1.00$\times$
& 1.00 & 35.37 & 1.00$\times$
& 1.00 & 34.58 & 1.00$\times$
& 1.00 & 29.74 & 1.00$\times$ \\

& MagicDec    
& 2.40 & 31.37 & 1.10$\times$
& 2.38 & 30.84 & 1.13$\times$
& 2.43 & 32.58 & 0.92$\times$
& 2.68 & 35.77 & 1.03$\times$
& 2.85 & 35.67 & 1.20$\times$ \\

& PLD
& 1.67 & 35.35 & 1.24$\times$ 
& 1.18 & 24.10 & 0.89$\times$ 
& 1.85 & 43.74 & 1.24$\times$ 
& 1.88 & 49.12 & 1.42$\times$ 
& 1.80 & 41.07 & 1.38$\times$ \\

& \textbf{LongSpec}  
& \textbf{3.58} & \textbf{76.26} & \textbf{2.67}$\times$
& \textbf{3.15} & \textbf{64.41} & \textbf{2.37}$\times$
& \textbf{3.50} & \textbf{80.48} & \textbf{2.28}$\times$
& \textbf{4.01} & \textbf{90.92} & \textbf{2.63}$\times$
& \textbf{4.46} & \textbf{96.96} & \textbf{3.26}$\times$ \\
\midrule

\multirow{4}{*}{\rotatebox{90}{L-8B}}
& Vanilla HF  
& 1.00 & 21.59 & -
& 1.00 & 18.67 & -
& 1.00 & 29.91 & -
& 1.00 & 29.48 & -
& 1.00 & 22.77 & - \\

& Vanilla FA  
& 1.00 & 53.14 & 1.00$\times$
& 1.00 & 51.22 & 1.00$\times$
& 1.00 & 56.94 & 1.00$\times$
& 1.00 & 56.73 & 1.00$\times$
& 1.00 & 54.08 & 1.00$\times$ \\

& MagicDec    
& 2.04 & 36.14 & 0.68$\times$
& 2.00 & 35.78 & 0.70$\times$
& 2.33 & 39.57 & 0.70$\times$
& 2.65 & 46.95 & 0.83$\times$
& 2.61 & 44.39 & 0.82$\times$ \\

& PLD
& 2.08 & 77.45 & 1.46$\times$ 
& 1.52 & 45.76 & 0.89$\times$ 
& 1.94 & 78.00 & 1.37$\times$ 
& 1.59 & 54.75 & 0.97$\times$ 
& 1.38 & 45.70 & 0.85$\times$ \\

& \textbf{LongSpec}  
& \textbf{3.25} & \textbf{84.57} & \textbf{1.59}$\times$
& \textbf{2.99} & \textbf{75.68} & \textbf{1.48}$\times$
& \textbf{3.36} & \textbf{91.11} & \textbf{1.60}$\times$
& \textbf{3.28} & \textbf{89.33} & \textbf{1.57}$\times$
& \textbf{3.39} & \textbf{91.28} & \textbf{1.69}$\times$ \\

\bottomrule
\end{tabular}
}
\end{table*}

\textbf{Test Benchmarks.}
For conventional long-context understanding tasks, we select tasks from the LongBench benchmark~\cite{bai2024longbench} that involve generating longer outputs, because tasks with shorter outputs, such as document-QA, make it challenging to measure the speedup ratio fairly with speculative decoding. 
Specifically, we focus on long-document summarization and code completion tasks and conduct tests on five datasets: GovReport~\cite{huang2021efficient}, QMSum~\cite{zhong2021qmsum}, Multi-News~\cite{fabbri2019multi}, LCC~\cite{guo2023longcoder}, and RepoBench-P~\cite{liu2024repobench}. For math reasoning tasks, we test QwQ-32B on four math reasoning datasets: AIME24~\citep{li2024numinamath}, AMC~\citep{li2024numinamath}, MATH500~\citep{hendrycks2021measuring}, and Minerva Math~\citep{lewkowycz2022solving}.

We compare our method with the original target model, PLD~\cite{saxena2023prompt}, and MagicDec~\cite{chen2024magicdec}. PLD is the most popular retrieval-based method (also known as $n$-gram SD in vLLM~\cite{kwon2023efficient}), and MagicDec is a simple prototype of TriForce. 
To highlight the significance of \texttt{Flash Attention} in long-context scenarios, we also present the performance of the original target model using both eager attention implemented by HuggingFace and \texttt{Flash Attention} for comparison.
To make a fair comparison, we also use \texttt{Flash Attention} for baseline MagicDec.
The most important metric for speculative decoding is the \emph{walltime speedup ratio}, which is the actual test speedup ratio relative to vanilla autoregressive decoding. 
We also test the \emph{average acceptance length} $\tau$, \emph{i.e.}, the average number of tokens accepted per forward pass of the target LLM. 

\begin{figure*}[t]
    \centering
    \includegraphics[width=1\linewidth]{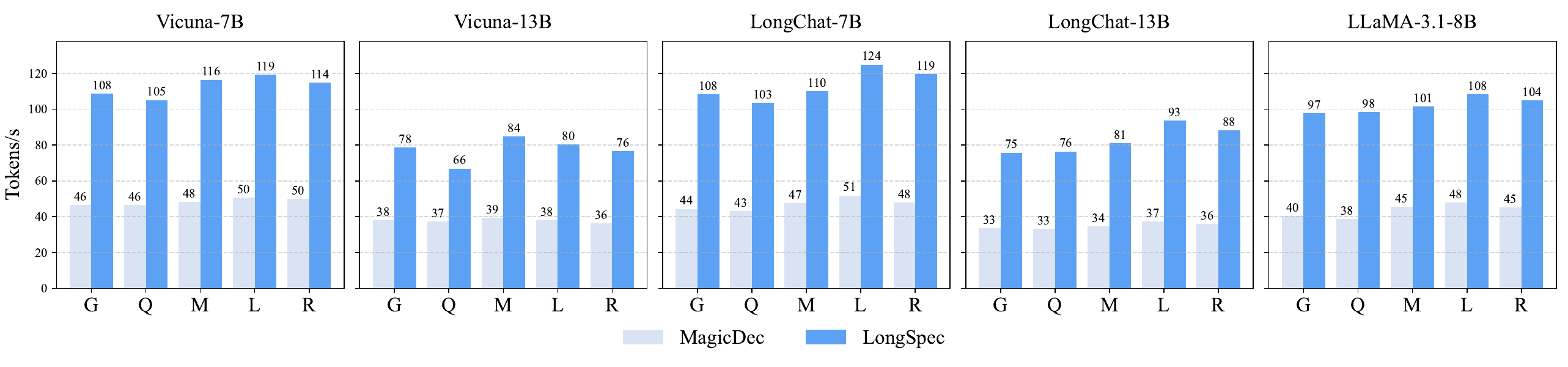}
    \caption{Decoding speed (tokens/s) across different models and settings. All results are computed at $T=1$. The letters G, Q, M, L, and R on the horizontal axis represent the datasets GovReport, QMSum, Multi-News, LCC, and RepoBench-P respectively.}
    \label{fig:final_fig}
    \vspace{-.5cm}
\end{figure*}

\subsection{Main Results}
\label{sec:main_results}

Table~\ref{tab:final_table} and Figure~\ref{fig:final_fig} show the decoding speeds and average acceptance lengths across the five evaluated datasets at $T=0$ and $T=1$, where $T$ denotes the temperature used in LLM sampling.
Our proposed method significantly outperforms all other approaches on both summarization tasks and code completion tasks. When $T=0$, on summarization tasks, our method can achieve an average acceptance length of around 3.5 and a speedup of up to 2.67$\times$; and on code completion tasks, our method can achieve an average acceptance length of around 4 and a speedup of up to 3.26$\times$. This highlights the robustness and generalizability of our speculative decoding approach, particularly in long-text generation tasks. At $T=1$, our method achieves around 2.5$\times$ speedup, maintaining a substantial lead over MagicDec. This indicates that our approach is robust across different temperature settings, further validating its soundness and efficiency.

Although PLD can accelerate generation on many datasets, it still does not match the performance of our proposed LongSpec. In some scenarios (\emph{e.g.}, when retrieval is minimal), PLD can even result in negative acceleration. For another baseline, MagicDec, while it demonstrates competitive acceptance rates compared to LongSpec, its speedup is noticeably lower in our experiments. This is because MagicDec is primarily designed for scenarios with large batch sizes and tensor parallelism. 
In low-batch-size settings, its draft model, which leverages all parameters of the target model with a sparse KV cache, becomes excessively heavy. 
This design choice leads to inefficiencies, as the draft model's computational overhead outweighs its speculative benefits. 
Our results reveal that MagicDec only achieves acceleration ratios~$>\!1$ on partial datasets when using a guess length $\gamma \!=\! 2$ and consistently exhibits negative acceleration around 0.7$\times$ when $\gamma\!\geq\!3$, further underscoring the limitations of this method in such configurations. The performance of MagicDec in larger batch sizes can be found in Section~\ref{sec:throughput}.

Lastly, we find that attention implementation plays a critical role in long-context speculative decoding performance. In our experiments, ``Vanilla HF'' refers to HuggingFace’s attention implementation, while ``Vanilla FA'' employs \texttt{Flash Attention}. The latter demonstrates nearly a $2\times$ speedup over the former, even as a standalone component, and our method can achieve up to $6\times$ speedup over HF Attention on code completion datasets. 
This result underscores the necessity for speculative decoding methods to be compatible with optimized attention mechanisms like \texttt{Flash Attention}, especially in long-text settings. Our hybrid tree attention approach achieves this compatibility, allowing us to fully leverage the advantages of \texttt{Flash Attention} and further speedup.

\subsection{Ablation Studies}

\begin{table}[t]
    \centering
    \caption{Performance comparison with and without Anchor-Offset Indices on the Multi-News and RepoBench-P datasets. Models with Anchor-Offset Indices achieve higher output speed and larger acceptance length, highlighting their efficiency and effectiveness.}
    \label{tab:ablation_1}
    \scalebox{0.75}{
    \begin{tabular}{c c c c c}
        \toprule
        & \multicolumn{2}{c}{\textbf{Multi-News}} 
        & \multicolumn{2}{c}{\textbf{RepoBench-P}} \\
        \cmidrule(lr){2-3} \cmidrule(lr){4-5}
        & $\tau$ & Tokens/s
        & $\tau$ & Tokens/s \\
        \midrule
        w/o Anchor-Offset & 3.20 & 85.98 & 3.26 & 85.21\\
        w/ Anchor-Offset  & 3.36 & 91.11 & 3.39 & 91.28\\
        \bottomrule
    \end{tabular}
    }
\end{table}

\textbf{Anchor-Offset Indices.} 
The experimental results demonstrate the significant benefits of incorporating the Anchor-Offset Indices. Figure~\ref{fig:ablation_1} shows that the model trained with Anchor-Offset Indices achieves a lower initial loss and final loss compared to the one trained without them when training on the real long-context dataset. 
Notably, the initialization with Anchor-Offset Indices reaches the same loss level $3.93\times$ faster than its counterpart. 
Table~\ref{tab:ablation_1} further highlights the performance improvements across two datasets, a summary dataset Multi-News, and a code completion dataset RepoBench-P. 
Models with Anchor-Offset Indices exhibit faster output speed and larger average acceptance length $\tau$. These results underscore the effectiveness of Anchor-Offset Indices in enhancing both training efficiency and model performance.

\begin{figure}
  \begin{center}
    \includegraphics[width=0.85\linewidth]{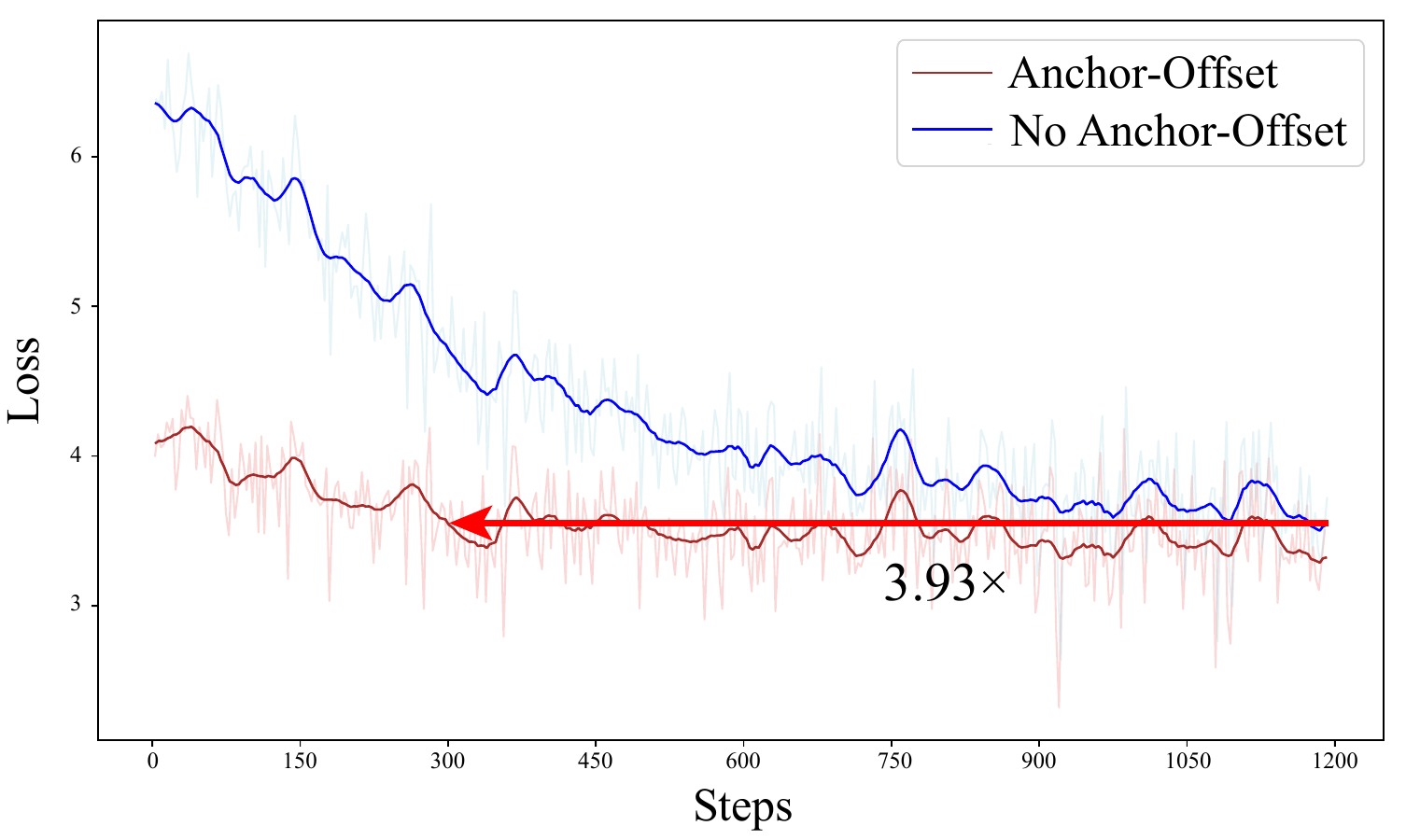}
  \end{center}
  \caption{Training loss curves on long-context data. Pretrained models with Anchor-Offset Indices exhibit lower initial and final loss, and reach the same loss level 3.93$\times$ faster compared to models without Anchor-Offset Indices.}
  \label{fig:ablation_1}
\end{figure}

\textbf{Hybrid Tree Attention.}
The results presented in Figure \ref{fig:ablation_2} highlight the effectiveness of the proposed Hybrid Tree Attention, which combines \texttt{Flash Attention} with the Triton kernel \texttt{fused\_mask\_attn}. 
While the time spent on the draft model forward pass and the target model FFN computations remain comparable across the two methods, the hybrid approach exhibits a significant reduction in latency for the target model's attention layer (the yellow part). 
Specifically, the attention computation latency decreases from 49.92 ms in the HF implementation to 12.54 ms in the hybrid approach, resulting in an approximately 75\% improvement. 
The verification step time difference is minimal, further solidifying the conclusion that the primary performance gains stem from optimizing the attention mechanism.

\begin{figure}[t]
    \centering
    \includegraphics[width=\linewidth]{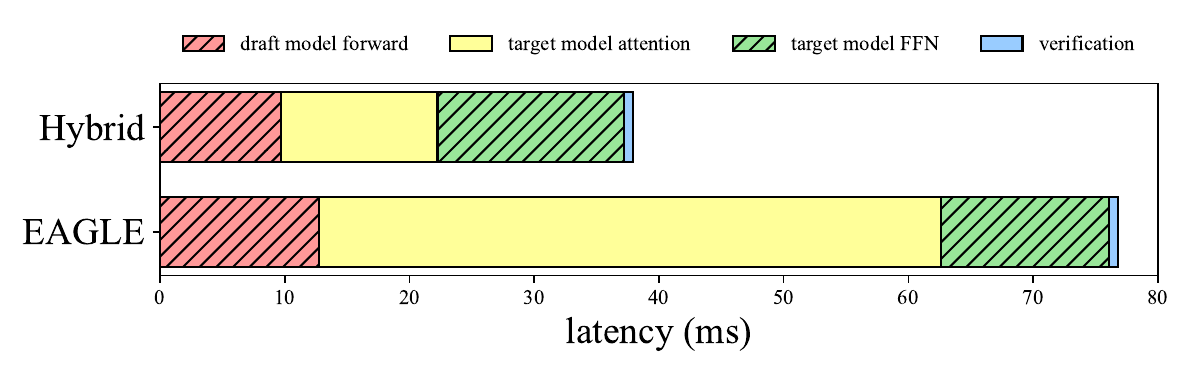}
    \caption{Latency breakdown for a single speculative decoding loop comparing the EAGLE implementation and the proposed Hybrid Tree Attention. Significant latency reduction is observed in the target model's attention layer (the yellow part) using our approach.}
    \label{fig:ablation_2}
\end{figure}

\begin{table}
\caption{Performance of our method on the QwQ-32B model on four math reasoning datasets, using a maximum output length of 32k tokens. The table shows the tokens generated per second and the mean number of accepted tokens $\tau$, where our approach achieves about 2.34$\times$ higher speed compared to the baseline on average and an average of 3.81 acceptance tokens.}
\label{tab:qwq}
\centering
\resizebox{\linewidth}{!}{
\begin{tabular}{c c c c c}
\toprule
Dataset & Metric & Vanilla & LongSpec & Improvement\\

\midrule
\multirow{2}{*}{AIME24} & $\tau$ & 1.00 & 3.82 & 3.82$\times$\\
\cmidrule(lr){2-5}
& Tokens/s & 18.92 & 42.63 & 2.25$\times$\\

\midrule
\multirow{2}{*}{AMC} & $\tau$ & 1.00 & 3.81 & 3.81$\times$\\
\cmidrule(lr){2-5}
& Tokens/s & 19.41 & 45.16 & 2.33$\times$\\

\midrule
\multirow{2}{*}{Minerva} & $\tau$ & 1.00 & 3.65 & 3.65$\times$\\
\cmidrule(lr){2-5}
& Tokens/s & 19.46 & 44.51 & 2.29$\times$\\

\midrule
\multirow{2}{*}{MATH500} & $\tau$ & 1.00 & 3.95 & 3.95$\times$\\
\cmidrule(lr){2-5}
& Tokens/s & 19.59 & 48.36 & 2.47$\times$\\

\bottomrule
\end{tabular}
}
\end{table}

\subsection{Long Reasoning Acceleration}

Long reasoning tasks have gained significant attention recently due to their ability to enable models to perform complex reasoning and problem-solving over extended outputs~\cite{qwen2024qwq, openai2024o1}. In these tasks, while the prefix input is often relatively short, the generated output can be extremely long, posing unique challenges in terms of efficiency and token acceptance. Our method is particularly well-suited for addressing these challenges, effectively handling scenarios with long outputs. It is worth mentioning that MagicDec is not suitable for such long-output scenarios because the initial inference stage of the long reasoning task is not the same as the traditional long-context task. In long reasoning tasks, where the prefix is relatively short, the draft model in MagicDec will completely degrade into the target model, failing to achieve acceleration.

We evaluate our method on the QwQ-32B model using four widely used benchmarks with a maximum output length set to 32k tokens. 
The results, illustrated in Table~\ref{tab:qwq}, demonstrate a significant improvement in both generation speed and average acceptance tokens. 
Specifically, our method achieves a generation rate of around 45 tokens/s, 2.34$\times$ higher than the strong \texttt{Flash Attention} baseline, and an average of 3.81 average acceptance tokens.
Notably, QwQ-32B with \textsc{LongSpec} achieves even lower latency than the standard 7B model with \texttt{Flash Attention}, demonstrating that our method effectively accelerates the long reasoning model.
These findings not only highlight the effectiveness of our method in the long reasoning task but also provide new insights into lossless inference acceleration for the o1-like model.
We believe speculative decoding will play a crucial role in accelerating this type of model in the future.

\begin{figure}
  \begin{center}
    \includegraphics[width=0.9\linewidth]{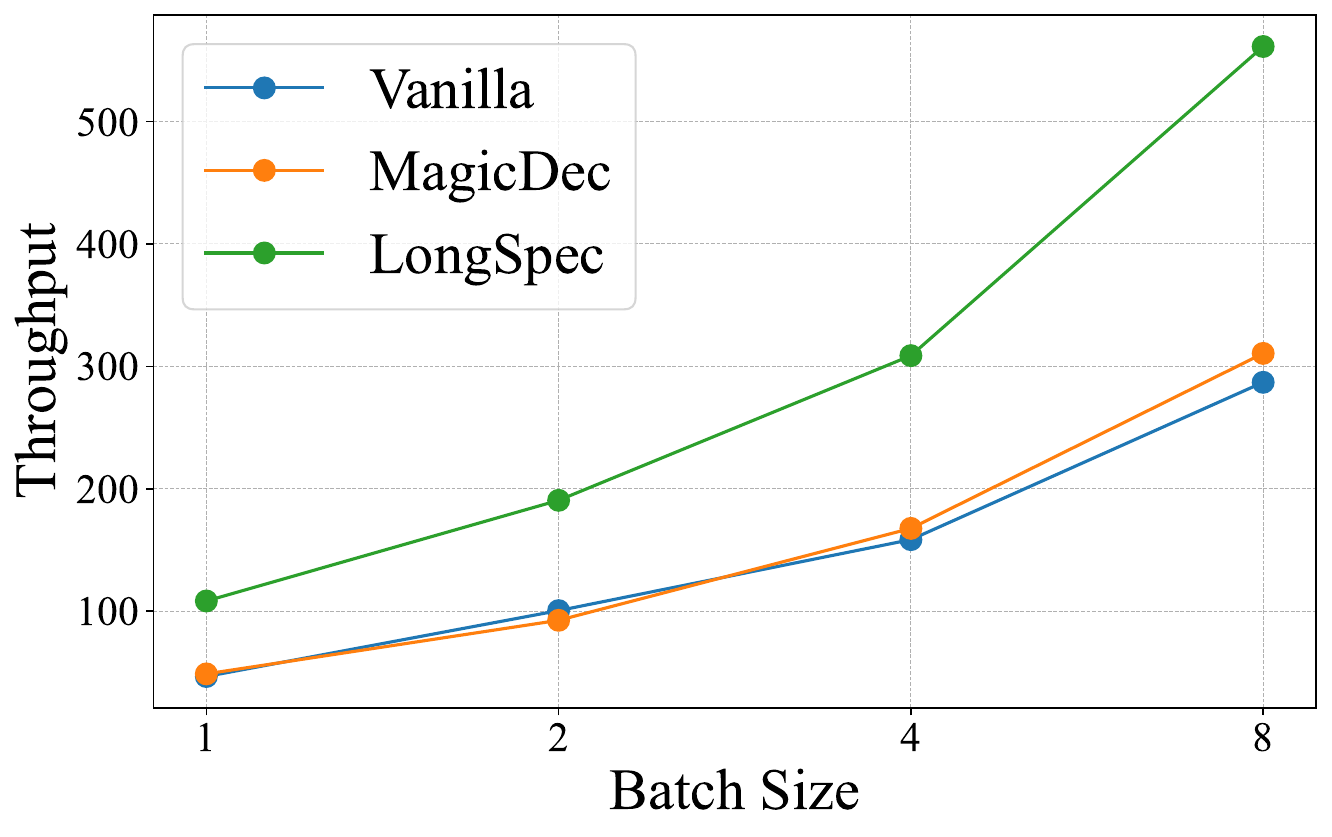}
  \end{center}
  \caption{Throughput comparison of Vanilla, MagicDec, and \textsc{LongSpec}.}
  \label{fig:throughput}
  \vspace{-.5cm}
\end{figure}

\vspace{-.2cm}
\subsection{Throughput}
\label{sec:throughput}

As illustrated in Figure \ref{fig:throughput}, the throughput results of Vicuna-7B on the RepoBench-P dataset show that \textsc{LongSpec} consistently outperforms both Vanilla and MagicDec across all batch sizes. At a batch size of 8, \textsc{LongSpec} achieves a throughput of 561.32 tokens/s, approximately 1.8$\times$ higher than MagicDec (310.58 tokens/s) and nearly 2$\times$ higher than Vanilla (286.96 tokens/s). MagicDec, designed with throughput optimization in mind, surpasses Vanilla as the batch size increases, reflecting its targeted improvements. However, \textsc{LongSpec} still sustains its advantage, maintaining superior throughput across all tested batch sizes.

\section{Conclusion}

In this paper, we propose \textsc{LongSpec}, a novel framework designed to enhance lossless speculative decoding for long-context scenarios. Unlike previous speculative decoding methods that primarily focus on short-context settings, \textsc{LongSpec} directly addresses three key challenges: excessive memory overhead, inadequate training for large position indices, and inefficient tree attention computation. To mitigate memory constraints, we introduce an efficient draft model architecture that maintains a constant memory footprint by leveraging a combination of sliding window self-attention and cache-free cross-attention. To resolve the training limitations associated with short-context data, we propose the Anchor-Offset Indices, ensuring that large positional indices are sufficiently trained even within short-sequence datasets. Finally, we introduce Hybrid Tree Attention, 
which efficiently integrates tree-based speculative decoding with \texttt{Flash Attention}. Extensive experiments demonstrate the effectiveness of \textsc{LongSpec} in long-context understanding tasks and real-world long reasoning tasks. Our findings highlight the importance of designing speculative decoding methods specifically tailored for long-context settings and point to promising directions for future research in efficient large-scale language model inference.

\bibliography{example_paper}




\newpage
\appendix

\section{Related Work about Lossy Speculative Decoding}
\label{sec:lossy_related}

While original speculative decoding methods are mainly lossless, some recent works try to relax the constraints and explore lossy speculative decoding. For instance, BiLD~\cite{kim2023speculative} employs a small model for autoregressive text generation, with a larger model occasionally invoked non-autoregressively to refine inaccurate predictions, thereby achieving speedups with minimal quality degradation. Narasimhan et al.~\cite{narasimhan2025faster} introduce speculative cascading, a method that integrates cascade-style deferral rules with speculative execution to yield better cost-quality trade-offs than either approach alone. Another approach, MTAD~\cite{qin2025multi}, uses a smaller auxiliary model to approximate the multi-token joint distribution of a larger model, enhancing both inference speed and output effectiveness by accepting a bounded error in this approximation. To address the rejection of high-quality but non-aligned draft tokens, Bachmann et al.~\cite{bachmann2025judge} propose adapting the verification step by training a compact ``judge'' module to recognize valid continuations even without perfect target model alignment, significantly boosting acceptance rates and speed. RSD~\cite{liao2025reward} incorporates a process reward model to evaluate intermediate decoding steps, dynamically deciding target model invocation and introducing a controlled bias towards high-reward outputs to optimize the cost-quality trade-off. RAPID~\cite{chen2025long} employs a RAG-based approach on shortened contexts as its drafter. TokenSwift~\cite{wu2025from} comprehensively uses LLMs with partial KV cache and $N$-gram tables to accelerate ultra-long sequence generation (up to 100k tokens) while reducing computation time from hours to minutes.

\section{The Intuition of Why KV Cache Can Help}
\label{sec:defend}

The KV cache stores the contextual information the model accumulates while processing previous tokens. When predicting the next token, the target model relies on three components: the KV cache (contextual memory), input word embeddings, and model parameters.

In our method, the draft model already shares the input embeddings with the target model, so the primary differences in their predictions stem from the KV cache and internal parameters. By allowing the draft model to use the KV cache generated by the target model, we eliminate another source of variation. As a result, the only remaining difference between their predictions comes from the model parameters. This sharing aligns the draft model’s predictions more closely with those of the target model, as it removes discrepancies caused by differing contextual representations.

\section{Correctness for Attention Aggregation}
\label{appendix:attn_aggr}

Because the query matrix $Q$ can be decomposed into several rows, each representing a separate query $q$, we can only consider the output of each row's $q$ after calculating attention with KV. In this way, we can assume that the KV involved in the calculation has undergone the tree mask, which can simplify our proof. We only need to prove that the output $o$ obtained from each individual $q$ meets the requirements, which can indicate that the overall output $O$ of the entire matrix $Q$ also meets the requirements.

\begin{proposition}
    Denote the log-sum-exp of the merged attention as follows:
    \begin{equation*}
        \mathrm{LSE}_{\mathrm{merge}} = \log\Bigl(\exp\bigl(\mathrm{LSE}_{\mathrm{cache}}\bigr) \;+\; \exp\bigl(\mathrm{LSE}_{\mathrm{specs}}\bigr)\Bigr),
    \end{equation*}
    Then we can write the merged attention output in the following way:
    \begin{align*}
    o_{\mathrm{merge}} = &o_{\mathrm{cache}} \cdot \exp\bigl(\mathrm{LSE}_{\mathrm{cache}} - \mathrm{LSE}_{\mathrm{merge}}\bigr) \\+& o_{\mathrm{specs}} \cdot\exp\bigl(\mathrm{LSE}_{\mathrm{specs}} - \mathrm{LSE}_{\mathrm{merge}}\bigr).
    \end{align*}
\end{proposition}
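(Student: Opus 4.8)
The plan is to verify the identity by unwinding the definitions of the softmax-based attention output and the log-sum-exp quantities, then checking that the claimed weighted combination reconstructs exactly the full softmax over the concatenated key-value set. Fix a single query row $q$ (the reduction to a single row is already justified in the text, since attention acts row-wise and we may absorb the tree mask into the key-value set that participates). Write $K_{\mathrm{cache}}$ with rows $k_1,\dots,k_m$ and $K_{\mathrm{specs}}$ with rows $k_{m+1},\dots,k_{m+n}$, and similarly for the value rows $v_i$. Recall that $\mathrm{LSE}_{\mathrm{cache}} = \log\sum_{i\le m}\exp(q k_i^\mathsf{T})$ and $\mathrm{LSE}_{\mathrm{specs}} = \log\sum_{m<i\le m+n}\exp(q k_i^\mathsf{T})$, while $o_{\mathrm{cache}} = \sum_{i\le m}\frac{\exp(q k_i^\mathsf{T})}{\exp(\mathrm{LSE}_{\mathrm{cache}})}\,v_i$ and likewise for $o_{\mathrm{specs}}$. (If the implementation uses a numerically stabilized softmax with a max-subtraction, the same argument goes through verbatim because the stabilizing constant cancels between numerator and the exponentiated $\mathrm{LSE}$; I would remark on this but not belabor it.)

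The key step is then a direct computation. By definition of $\mathrm{LSE}_{\mathrm{merge}}$ we have $\exp(\mathrm{LSE}_{\mathrm{merge}}) = \exp(\mathrm{LSE}_{\mathrm{cache}}) + \exp(\mathrm{LSE}_{\mathrm{specs}}) = \sum_{i\le m+n}\exp(q k_i^\mathsf{T})$, so $\mathrm{LSE}_{\mathrm{merge}}$ is exactly the log-sum-exp of the full attention over all $m+n$ keys. Now substitute the expressions for $o_{\mathrm{cache}}$ and $o_{\mathrm{specs}}$ into the right-hand side of the claimed identity:
\[
\begin{aligned}
&o_{\mathrm{cache}}\,\exp(\mathrm{LSE}_{\mathrm{cache}}-\mathrm{LSE}_{\mathrm{merge}}) + o_{\mathrm{specs}}\,\exp(\mathrm{LSE}_{\mathrm{specs}}-\mathrm{LSE}_{\mathrm{merge}}) \\
&\quad = \sum_{i\le m}\frac{\exp(q k_i^\mathsf{T})}{\exp(\mathrm{LSE}_{\mathrm{cache}})}\,v_i\cdot\frac{\exp(\mathrm{LSE}_{\mathrm{cache}})}{\exp(\mathrm{LSE}_{\mathrm{merge}})} + \sum_{m<i\le m+n}\frac{\exp(q k_i^\mathsf{T})}{\exp(\mathrm{LSE}_{\mathrm{specs}})}\,v_i\cdot\frac{\exp(\mathrm{LSE}_{\mathrm{specs}})}{\exp(\mathrm{LSE}_{\mathrm{merge}})}.
\end{aligned}
\]
The $\exp(\mathrm{LSE}_{\mathrm{cache}})$ and $\exp(\mathrm{LSE}_{\mathrm{specs}})$ factors cancel within their respective sums, leaving $\sum_{i\le m+n}\frac{\exp(q k_i^\mathsf{T})}{\exp(\mathrm{LSE}_{\mathrm{merge}})}\,v_i$, which is precisely the definition of the full softmax attention output $o_{\mathrm{merge}}$ over the concatenated key-value set. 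This establishes the identity.

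Honestly there is no real obstacle here — the statement is an algebraic identity about how softmax decomposes over a partition of the key set, and the proof is a one-line cancellation once the notation is set up. The only things that require a sentence of care are: (i) confirming that the same normalization convention (scaling factor $1/\sqrt{d}$, optional stabilizing max) is used consistently in all three attention calls, so that the per-term weights $\exp(q k_i^\mathsf{T})$ genuinely agree across the split and merged computations; and (ii) lifting from a single query row back to the full matrices $Q$, $O$, which is immediate since every row is handled independently and the row-wise identities assemble into the matrix identity. I would present the argument in exactly the order above: set notation, compute $\exp(\mathrm{LSE}_{\mathrm{merge}})$, substitute, cancel, identify with the full attention output, then remark on the matrix-level statement.
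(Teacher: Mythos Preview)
Your proposal is correct and follows essentially the same approach as the paper's own proof: both set up the per-row notation, write the partial outputs $o_{\mathrm{cache}}$ and $o_{\mathrm{specs}}$ as softmax-weighted sums over their respective key--value subsets, observe that $\exp(\mathrm{LSE}_{\mathrm{merge}})$ equals the full denominator, and then verify the identity by direct cancellation. The paper's version is slightly terser (it writes $o_{\mathrm{merge}}$ directly and then ``aggregates'' the two partial expressions into it), whereas you substitute into the claimed right-hand side and simplify, but these are the same computation read in two directions.
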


\begin{proof}
    
A standard scaled dot-product attention for $ q $ (of size $d_{qk}$) attending to $ K_{\mathrm{merge}} $ and $ V_{\mathrm{merge}} $ (together of size $(M+N) \times d_{qk}$ and $(M+N) \times d_v$ respectively) can be written as:

\begin{align*}
    o_{\mathrm{merge}} &= \mha\left(q, K_{\mathrm{merge}}, V_{\mathrm{merge}}\right) \\&=
    \sm\left(
      qK_{\mathrm{merge}}^\top/\sqrt{d_{qk}}
    \right) V_{\mathrm{merge}}.
\end{align*}

Because $ K $ and $ V $ are formed by stacking $\left(K_{\mathrm{specs}}, K_{\mathrm{cache}}\right)$ and $\left(V_{\mathrm{specs}}, V_{\mathrm{cache}}\right)$, we split the logit matrix accordingly:

\begin{align*}
    q K_{\mathrm{merge}}^\top / \sqrt{d_{qk}} = 
    \texttt{concat}\Bigl(&
    \underbrace{
      q \, K_{\mathrm{cache}}^\top / \sqrt{d_{qk}}
    }_{\mathrm{sub-logits\ for\ history}}
    \;, \\&
    \underbrace{
      q \, K_{\mathrm{specs}}^\top / \sqrt{d_{qk}}
    }_{\mathrm{sub-logits\ for\ new}}
    \Bigr).
\end{align*}

Denote these sub-logit matrices as:
\begin{align*}
Z_{\mathrm{cache}} \;&=\; q \,K_{\mathrm{cache}}^\top / \sqrt{d_{qk}}
,\;\\
Z_{\mathrm{specs}} \;&=\; q \, K_{\mathrm{specs}}^\top / \sqrt{d_{qk}}.
\end{align*}

Each row $i$ of $Z_{\mathrm{specs}}$ corresponds to the dot products between the $i$-th query in $q$ and all rows in $K_{\mathrm{specs}}$, while rows of $Z_{\mathrm{cache}}$ correspond to the same query but with $K_{\mathrm{cache}}$.

In order to combine partial attentions, we keep track of the log of the sum of exponentials of each sub-logit set. Concretely, define:

\begin{align}
    \mathrm{LSE}_{\mathrm{cache}} &= \log\left(\sum\nolimits_{j=1}^{N} \exp\left(Z_{\mathrm{cache}}^{(j)}\right)\right),\nonumber\\
    \;
    \mathrm{LSE}_{\mathrm{specs}} &= \log\left(\sum\nolimits_{j=1}^{M} \exp\left(Z_{\mathrm{specs}}^{(j)}\right)\right),
    \,
\end{align}

where $Z_{\mathrm{specs}}^{(j)}$ denotes the logit for the $j$-th element, and similarly for $Z_{\mathrm{cache}}^{(j)}$.





Then $o_{\mathrm{cache}}$ and $o_{\mathrm{specs}}$ can be written as:
\begin{align}
    \label{equ:split_attn}
    o_{\mathrm{cache}} &= \frac{\sum_{j=1}^{N} \exp\left(Z_{\mathrm{cache}}^{(j)}\right) V_{\mathrm{cache}}^{(j)}}{\exp\left(\mathrm{LSE}_{\mathrm{cache}}\right)}, \nonumber\\
    o_{\mathrm{specs}} &= \frac{\sum_{j=1}^{M} \exp\left(Z_{\mathrm{specs}}^{(j)}\right) V_{\mathrm{specs}}^{(j)}}{\exp\left(\mathrm{LSE}_{\mathrm{specs}}\right)}.
\end{align}

And the whole attention score can be written as:


\begin{align}
N_{\mathrm{num}} &=
\sum_{j=1}^{N} \exp\bigl(Z_{\mathrm{cache}}^{(j)}\bigr) V_{\mathrm{cache}}^{(j)}\nonumber\\
&+ \sum_{j=1}^{M} \exp\bigl(Z_{\mathrm{specs}}^{(j)}\bigr) V_{\mathrm{specs}}^{(j)}, \nonumber\\[0.5em]
D_{\mathrm{den}} &=
\exp\bigl(\mathrm{LSE}_{\mathrm{cache}}\bigr)
+ \exp\bigl(\mathrm{LSE}_{\mathrm{specs}}\bigr), \nonumber\\[0.5em]
o_{\mathrm{merge}} &= \frac{N_{\mathrm{num}}}{D_{\mathrm{den}}}.
\label{equ:all_attn}
\end{align}

By aggregating Equation \ref{equ:split_attn} into Equation \ref{equ:all_attn}, we can get the following equation:

\begin{align}
    o_{\mathrm{merge}} =& o_{\mathrm{cache}} \cdot\exp\bigl(\mathrm{LSE}_{\mathrm{cache}} - \mathrm{LSE}_{\mathrm{merge}}\bigr)
     \nonumber\\+& o_{\mathrm{specs}} \cdot \exp\bigl(\mathrm{LSE}_{\mathrm{specs}} - \mathrm{LSE}_{\mathrm{merge}}\bigr).
\end{align}

\end{proof}

\section{Experiments Details}
\label{appendix:training_details}

All models are trained using eight A100 80GB GPUs. For the 7B, 8B, and 13B target models trained on short-context data, we employ \textsc{LongSpec} with ZeRO-1~\cite{rasley2020deepspeed}. For the 7B, 8B, and 13B models trained on long-context data, as well as for all settings of the 33B target models, we utilize ZeRO-3. 

Standard cross-entropy is used to optimize the draft model while the parameters of the target model are kept frozen. To mitigate the VRAM peak caused by the computation of the logits, we use a fused-linear-and-cross-entropy loss implemented by the Liger Kernel~\cite{hsu2024liger}, which computes the LM head and the softmax function together and can greatly alleviate this problem. 

For the SlimPajama-6B dataset, we configure the batch size (including accumulation) to 2048, set the maximum learning rate to 5e-4 with a cosine learning rate schedule~\cite{loshchilov2017sgdr}, and optimize the draft model using AdamW~\cite{kingma2015adam}. When training on long-context datasets, we adopt a batch size of 256 and a maximum learning rate of 5e-6. The draft model is trained for only one epoch on all datasets.

It is important to note that the primary computational cost arises from forwarding the target model to obtain the KV cache. Recently, some companies have introduced a service known as context caching~\cite{deepseek2024contextcaching, gemini2024contextcaching}, which involves storing large volumes of KV cache. Consequently, in real-world deployment, these pre-stored KV caches can be directly utilized as training data, significantly accelerating the training process.

\begin{table*}[t]
\centering
\caption{Average acceptance length $\tau$ and decoding speed (tokens/s) across different models and settings. Specifically, ``Vanilla HF'' refers to HuggingFace’s PyTorch-based attention implementation, while ``Vanilla FA'' employs \texttt{Flash Attention}. All results are computed at $T=0$.}
\resizebox{\linewidth}{!}{
\begin{tabular}{cc|cccccccccc}
\toprule
& \multirow{2}{*}{Setting} 
& \multicolumn{2}{c}{GovReport} 
& \multicolumn{2}{c}{QMSum}
& \multicolumn{2}{c}{MultiNews}
& \multicolumn{2}{c}{LCC}
& \multicolumn{2}{c}{RB-P} \\
\cmidrule(lr){3-4} \cmidrule(lr){5-6} \cmidrule(lr){7-8} \cmidrule(lr){9-10} \cmidrule(lr){11-12}
& 
& $\tau$ & Tokens/s
& $\tau$ & Tokens/s
& $\tau$ & Tokens/s
& $\tau$ & Tokens/s
& $\tau$ & Tokens/s\\
\midrule

\multirow{4}{*}{\rotatebox{90}{V-7B}}
& Vanilla HF  
& 1.00 & 25.25
& 1.00 & 18.12
& 1.00 & 27.29
& 1.00 & 25.25
& 1.00 & 19.18\\

& Vanilla FA
& 1.00 & 45.76
& 1.00 & 43.68
& 1.00 & 55.99
& 1.00 & 54.07
& 1.00 & 46.61\\

& TR
& 2.83 & 94.06
& 2.13 & 68.23
& 2.81 & 94.51
& 2.72 & 87.77
& 2.83 & 94.10\\

& EAGLE
& 2.02 & 33.43
& 1.91 & 26.78
& 1.97 & 36.62
& 1.92 & 40.64
& 1.92 & 33.84\\

& LongSpec
& 3.57 & 102.23 
& 3.14 &  88.87
& 3.51 & 100.55
& 3.73 & 107.30
& 3.86 & 110.76\\
\midrule

\multirow{4}{*}{\rotatebox{90}{LC-7B}}
& Vanilla HF  
& 1.00 & 25.27
& 1.00 & 14.11
& 1.00 & 27.66
& 1.00 & 25.27
& 1.00 & 17.02 \\

& Vanilla FA  
& 1.00 & 42.14
& 1.00 & 36.87
& 1.00 & 50.19
& 1.00 & 54.17
& 1.00 & 42.69 \\

& TR
& 2.90 & 94.82 
& 2.20 & 64.96 
& 2.75 & 94.94 
& 2.80 & 96.67 
& 3.05 & 100.41\\

& EAGLE    
& 2.10 & 32.06
& 1.94 & 26.02
& 2.02 & 34.38
& 2.09 & 38.81
& 2.10 & 29.75\\

& LongSpec  
& 3.59 & 101.43
& 3.06 &  85.23
& 3.41 &  97.93
& 4.21 & 122.30
& 4.03 & 115.27\\





\bottomrule
\end{tabular}
}
\label{tab:eagle}
\end{table*}

For the tree decoding of \textsc{LongSpec}, we employ dynamic beam search to construct the tree. Previous studies have shown that beam search, while achieving high acceptance rates, suffers from slow processing speed in speculative decoding~\cite{du2024glide}. Our research identifies that this slowdown is primarily caused by KV cache movement. In traditional beam search, nodes that do not fall within the top-$k$ likelihood are discarded, a step that necessitates KV cache movement. However, in speculative decoding, discarding these nodes is unnecessary, as draft sequences are not required to maintain uniform lengths. Instead, we can simply halt the computation of descendant nodes for low-likelihood branches without removing them entirely. By adopting this approach, beam search attains strong performance without excessive computational overhead. In our experiments, the beam width is set to $[4, 16, 16, 16, 16]$ for each speculation step. All inference experiments in this study are conducted using float16 precision on a single A100 80GB GPU.

\section{Experimental Results of EAGLE and  Token Recycling on Long-Context Speculative Decoding}

In Table~\ref{tab:eagle}, we compare the average acceptance length $\tau$ and decoding speed (tokens/s) for two models under four settings: the baseline PyTorch implementation from HuggingFace (``Vanilla HF''), the same model with \texttt{Flash Attention} (``Vanilla FA''), Token Recycling~\citep{luo2025turning} (``TR'', a SoTA retrieval-based method), EAGLE~\citep{li2024eagle} (trained with anchor offset indices and inference with HuggingFace), and our LongSpec with hybrid tree attention. Across five datasets (GovReport, QMSum, MultiNews, LCC, and RB-P), Vanilla HF's decoding speeds are limited between 14 and 30 tokens/s, while switching to \texttt{Flash Attention} boosts speeds to about 50 tokens/s, a more than 2.5$\times$ speedup. 

EAGLE extends the acceptance length to around 2 and achieves 26–40 tokens/s, yielding a 30–50\% speedup over Vanilla HF. However, because EAGLE cannot leverage \texttt{Flash Attention}, its decoding speed remains substantially below that of Vanilla FA in every setting. As for TR, while it extends the acceptance length to around 3 (far larger than EAGLE) and achieves moderate acceleration on many tasks, it consistently underperforms LongSpec across the board.

In contrast, our LongSpec with hybrid tree attention achieves much higher decoding speeds of about 100 tokens/s across all models and datasets. This demonstrates that EAGLE’s incompatibility with \texttt{Flash Attention} fundamentally limits its decoding performance. Our hybrid tree attention preserves compatibility with \texttt{Flash Attention}, thus unlocking substantially higher decoding speed, underscoring the importance of combining tree-structured attention with SoTA long-context inference techniques such as \texttt{Flash Attention}.

\section{Performance Analysis with Varying Prefill Lengths}
\label{sec:prefill_length}

\begin{table*}[ht]
  \centering
  \caption{A detailed breakdown of performance as the prefill length increases, with LongChat-7B on GovReport.}
  \begin{tabular}{c|cccccc}
    \toprule
    Prefill Length      & 0–5k   & 5k–10k  & 10k–15k & 15k–20k & 20k–25k & 25k–32k \\
    \midrule
    Tokens/s  & 116.65 & 115.52  & 114.54  & 113.47  & 115.13  & 103.68  \\
    $\tau$  & 4.01   & 3.97    & 3.97    & 4.12    & 4.45    & 3.97    \\
    \midrule
    Draft time (ms)     & 8.91   & 8.92    & 8.93    & 8.98    & 9.13    & 9.25    \\
    Target time (ms)    & 25.63  & 25.66   & 25.61   & 27.30   & 29.08   & 30.89   \\
    Verify time (ms)    & 6.18   & 6.22    & 6.23    & 6.24    & 6.27    & 6.28    \\
    \bottomrule
  \end{tabular}
  \label{tab:prefill_length}
\end{table*}

In Table~\ref{tab:prefill_length}, we show a detailed breakdown of performance as the prefill length increases, with LongChat-7B on GovReport. Across the all token ranges, the generation speed remains remarkably stable, only dropping a little in the 25k-32k range. The average acceptance length remains consistent across all ranges, which indicates stable behavior in the number of tokens the system chooses to retain during generation. This stability suggests that the draft quality is unaffected by the length of the prefill, maintaining consistent output dynamics.

In terms of latency, the draft time increases only marginally, from 8.91 ms in the shortest context range to 9.25 ms in the longest, while target time shows a gradual increase from 25.63 ms to 30.89 ms, reflecting the added computational load of managing larger contexts. Verify time remains almost constant across all ranges, increasing only slightly from 6.18 ms to 6.28 ms.

Together, these results demonstrate that the system scales effectively with longer input contexts, maintaining high throughput and consistent drafting quality with only modest increases in latency. This highlights the practicality and robustness of our approach for real-world applications involving extended input sequences.

\section{Pseudo Code}
\label{appendix:pseudo-code}

Here we provide pseudo code for Anchor-Offset Indexing and Flash Noisy Training.

\begin{algorithm}[H]
\begin{algorithmic}[1]
\State \textbf{Input:} Sequence length $N$; Max length $\text{MAX\_LEN}$; Query states $q_s$.
\State \textbf{Output:} Query states with RoPE applied using modified indices.
\State $P \gets \{0, 1, \dots, N-1\}$ \Comment{Initial position indices}
\State $o \gets \text{RandomInt}(0, \text{MAX\_LEN} - N)$ \Comment{Generate random offset}
\State $P[4:] \mathrel{+}= o$ \Comment{Apply offset to indices after the first 4 anchors}
\State \quad \textit{// e.g., for $N=128, o=16257$, $P$ becomes $[0, 1, 2, 3, 16261, \dots, 16385]$}
\State \Return $\text{RoPE}(q_s, P)$
\captionof{algorithm}{Anchor-Offset Indexing}
\label{alg:anchor-offset-optimized}
\end{algorithmic}
\end{algorithm}

\newpage

\begin{algorithm}[H]
\begin{algorithmic}[1]
\State \textbf{Input:} Queries $Q$, Key cache $K$, Value cache $V$.
\State \textbf{Output:} Final attention output.
\State $j \gets \text{RandomInt}(1, 4)$ \Comment{Randomly select number of tokens to drop}
\Statex
\State \textit{//Perform attention on sliced inputs}
\State $Q' \gets Q[j:]$ \Comment{Drop first $j$ queries}
\State $K' \gets K[:-j]$ \Comment{Drop last $j$ keys from cache}
\State $V' \gets V[:-j]$ \Comment{Drop last $j$ values from cache}
\State $attn\_out \gets \text{FlashAttention}(Q', K', V')$
\Statex
\State \textit{//Pad the output to match original query length}
\State $padded\_out \gets \text{Concat}(\text{Zeros}(j), attn\_out)$
\State \Return $\text{OutputProjection}(padded\_out)$
\captionof{algorithm}{Flash Noisy Training}
\label{alg:flash-noisy-optimized}
\end{algorithmic}
\end{algorithm}

\section{Case Study}
\label{appendix:visualization}
Here we display some illustrative cases from GovReport on the Longchat-7B model, where tokens marked in blue indicate draft tokens accepted by the target model. Due to space limitations, the complete answer is not presented here.

\newpage

\begin{tcolorbox}
  \textcolor{blue}{The} \textcolor{black}{report} \textcolor{black}{discuss}\textcolor{blue}{es} \textcolor{blue}{the} \textcolor{blue}{use} \textcolor{blue}{of} \textcolor{blue}{op}\textcolor{black}{io}\textcolor{blue}{id} \textcolor{blue}{medic}\textcolor{blue}{ations} \textcolor{blue}{in} \textcolor{black}{health}\textcolor{blue}{care} \textcolor{blue}{and} \textcolor{black}{the} \textcolor{blue}{potential} \textcolor{blue}{ris}\textcolor{blue}{ks} \textcolor{blue}{associated} \textcolor{blue}{with} \textcolor{black}{their} \textcolor{black}{mis}\textcolor{blue}{use}\textcolor{blue}{.} \textcolor{blue}{Op}\textcolor{blue}{io}\textcolor{blue}{id} \textcolor{black}{medic}\textcolor{blue}{ations} \textcolor{blue}{are} \textcolor{blue}{used} \textcolor{blue}{to} \textcolor{blue}{treat} \textcolor{black}{pain} \textcolor{blue}{and} \textcolor{black}{can} \textcolor{blue}{also} \textcolor{blue}{be} \textcolor{blue}{used} \textcolor{blue}{to} \textcolor{blue}{treat} \textcolor{black}{other} \textcolor{black}{health} \textcolor{blue}{problems}\textcolor{blue}{,} \textcolor{blue}{such} \textcolor{blue}{as} \textcolor{black}{severe} \textcolor{blue}{c}\textcolor{blue}{ough}\textcolor{blue}{ing}\textcolor{blue}{.} \textcolor{black}{There} \textcolor{blue}{are} \textcolor{blue}{three} \textcolor{blue}{types} \textcolor{blue}{of} \textcolor{blue}{op}\textcolor{black}{io}\textcolor{blue}{id} \textcolor{blue}{medic}\textcolor{blue}{ations} \textcolor{blue}{that} \textcolor{blue}{are} \textcolor{black}{approved} \textcolor{blue}{for} \textcolor{blue}{use} \textcolor{blue}{in} \textcolor{blue}{the} \textcolor{blue}{treatment} \textcolor{black}{of} \textcolor{blue}{op}\textcolor{blue}{io}\textcolor{blue}{id} \textcolor{blue}{use} \textcolor{blue}{dis}\textcolor{black}{orders}\textcolor{blue}{:} \textcolor{black}{m}\textcolor{blue}{eth}\textcolor{blue}{ad}\textcolor{blue}{one}\textcolor{blue}{,} \textcolor{black}{bu}\textcolor{blue}{pr}\textcolor{blue}{en}\textcolor{blue}{orph}\textcolor{blue}{ine}\textcolor{blue}{,} \textcolor{black}{and} \textcolor{blue}{n}\textcolor{blue}{alt}\textcolor{blue}{re}\textcolor{blue}{x}\textcolor{blue}{one}\textcolor{black}{.} \textcolor{blue}{M}\textcolor{blue}{eth}\textcolor{blue}{ad}\textcolor{blue}{one} \textcolor{blue}{is} \textcolor{black}{a} \textcolor{blue}{full} \textcolor{black}{op}\textcolor{blue}{io}\textcolor{blue}{id} \textcolor{blue}{ag}\textcolor{blue}{on}\textcolor{blue}{ist}\textcolor{black}{,} \textcolor{blue}{meaning} \textcolor{blue}{it} \textcolor{black}{bind}\textcolor{blue}{s} \textcolor{blue}{to} \textcolor{black}{and} \textcolor{blue}{activ}\textcolor{blue}{ates} \textcolor{blue}{op}\textcolor{blue}{io}\textcolor{blue}{id} \textcolor{black}{re}\textcolor{blue}{cept}\textcolor{blue}{ors} \textcolor{blue}{in} \textcolor{blue}{the} \textcolor{blue}{body}\textcolor{black}{.} \textcolor{black}{Bu}\textcolor{blue}{pr}\textcolor{blue}{en}\textcolor{blue}{orph}\textcolor{blue}{ine} \textcolor{blue}{is} \textcolor{black}{a} \textcolor{blue}{partial} \textcolor{blue}{op}\textcolor{blue}{io}\textcolor{blue}{id} \textcolor{black}{ag}\textcolor{blue}{on}\textcolor{blue}{ist}\textcolor{blue}{,} \textcolor{blue}{meaning} \textcolor{blue}{it} \textcolor{black}{also} \textcolor{blue}{bind}\textcolor{blue}{s} \textcolor{blue}{to} \textcolor{black}{and} \textcolor{black}{activ}\textcolor{blue}{ates} \textcolor{blue}{op}\textcolor{blue}{io}\textcolor{blue}{id} \textcolor{blue}{re}\textcolor{black}{cept}\textcolor{blue}{ors}\textcolor{blue}{,} \textcolor{blue}{but} \textcolor{black}{to} \textcolor{blue}{a} \textcolor{blue}{less}\textcolor{blue}{er} \textcolor{blue}{extent} \textcolor{blue}{than} \textcolor{black}{m}\textcolor{blue}{eth}\textcolor{blue}{ad}\textcolor{blue}{one}\textcolor{blue}{.} \textcolor{black}{N}\textcolor{blue}{alt}\textcolor{blue}{re}\textcolor{blue}{x}\textcolor{blue}{one} \textcolor{blue}{is} \textcolor{black}{an} \textcolor{blue}{op}\textcolor{blue}{io}\textcolor{blue}{id} \textcolor{blue}{ant}\textcolor{blue}{agon}\textcolor{black}{ist}\textcolor{blue}{,} \textcolor{blue}{meaning} \textcolor{blue}{it} \textcolor{black}{bind}\textcolor{blue}{s} \textcolor{blue}{to} \textcolor{black}{and} \textcolor{black}{blocks} \textcolor{blue}{the} \textcolor{black}{effects} \textcolor{blue}{of} \textcolor{blue}{op}\textcolor{blue}{io}\textcolor{blue}{id} \textcolor{blue}{re}\textcolor{black}{cept}\textcolor{blue}{ors}\textcolor{blue}{.}
  
  \textcolor{black}{The} \textcolor{black}{report} \textcolor{black}{also} \textcolor{blue}{discuss}\textcolor{blue}{es} \textcolor{blue}{the} \textcolor{blue}{potential} \textcolor{blue}{ris}\textcolor{black}{ks} \textcolor{blue}{associated} \textcolor{blue}{with} \textcolor{blue}{the} \textcolor{blue}{use} \textcolor{black}{of} \textcolor{blue}{op}\textcolor{blue}{io}\textcolor{blue}{id} \textcolor{blue}{medic}\textcolor{blue}{ations}\textcolor{black}{,} \textcolor{blue}{including} \textcolor{blue}{the} \textcolor{black}{risk} \textcolor{blue}{of} \textcolor{blue}{add}\textcolor{blue}{iction} \textcolor{blue}{and} \textcolor{black}{the} \textcolor{blue}{risk} \textcolor{blue}{of} \textcolor{blue}{over}\textcolor{blue}{d}\textcolor{blue}{ose}\textcolor{black}{.} \textcolor{blue}{The} \textcolor{blue}{use} \textcolor{blue}{of} \textcolor{blue}{op}\textcolor{blue}{io}\textcolor{black}{id} \textcolor{blue}{medic}\textcolor{blue}{ations} \textcolor{blue}{can} \textcolor{blue}{lead} \textcolor{blue}{to} \textcolor{black}{physical} \textcolor{blue}{dependence} \textcolor{blue}{and} \textcolor{black}{toler}\textcolor{blue}{ance}\textcolor{blue}{,} \textcolor{blue}{which} \textcolor{blue}{can} \textcolor{black}{make} \textcolor{blue}{it} \textcolor{blue}{difficult} \textcolor{blue}{to} \textcolor{black}{stop} \textcolor{blue}{using} \textcolor{blue}{the} \textcolor{blue}{medic}\textcolor{blue}{ation}\textcolor{blue}{.} \textcolor{black}{Additionally}\textcolor{blue}{,} \textcolor{blue}{the} \textcolor{black}{mis}\textcolor{blue}{use} \textcolor{blue}{of} \textcolor{blue}{op}\textcolor{blue}{io}\textcolor{blue}{id} \textcolor{black}{medic}\textcolor{blue}{ations} \textcolor{blue}{can} \textcolor{blue}{lead} \textcolor{blue}{to} \textcolor{black}{add}\textcolor{blue}{iction}\textcolor{blue}{,} \textcolor{blue}{which} \textcolor{blue}{can} \textcolor{black}{have} \textcolor{blue}{serious} \textcolor{blue}{consequences} \textcolor{blue}{for} \textcolor{blue}{the} \textcolor{black}{individual} \textcolor{blue}{and} \textcolor{blue}{their} \textcolor{black}{loved} \textcolor{blue}{ones}\textcolor{blue}{.}
  
  \textcolor{blue}{The} \textcolor{black}{report} \textcolor{blue}{also} \textcolor{blue}{discuss}\textcolor{blue}{es} \textcolor{blue}{the} \textcolor{blue}{potential} \textcolor{black}{ris}\textcolor{blue}{ks} \textcolor{blue}{associated} \textcolor{blue}{with} \textcolor{blue}{the} \textcolor{black}{di}\textcolor{blue}{version} \textcolor{blue}{of} \textcolor{blue}{op}\textcolor{blue}{io}\textcolor{blue}{id} \textcolor{black}{medic}\textcolor{blue}{ations}\textcolor{blue}{,} \textcolor{blue}{which} \textcolor{black}{is} \textcolor{blue}{the} \textcolor{black}{illegal} \textcolor{blue}{use} \textcolor{blue}{of} \textcolor{blue}{pres}\textcolor{blue}{cription} \textcolor{blue}{op}\textcolor{black}{io}\textcolor{blue}{ids} \textcolor{blue}{for} \textcolor{black}{non}\textcolor{blue}{-}\textcolor{blue}{med}\textcolor{blue}{ical} \textcolor{blue}{purposes}\textcolor{blue}{.} \textcolor{black}{D}\textcolor{blue}{ivers}\textcolor{blue}{ion} \textcolor{black}{can} \textcolor{blue}{lead} \textcolor{blue}{to} \textcolor{blue}{increased} \textcolor{black}{rates} \textcolor{blue}{of} \textcolor{blue}{add}\textcolor{blue}{iction}\textcolor{blue}{,} \textcolor{black}{over}\textcolor{blue}{d}\textcolor{blue}{ose}\textcolor{blue}{,} \textcolor{blue}{and} \textcolor{blue}{death}\textcolor{black}{.}
  
  \textcolor{blue}{The} \textcolor{blue}{report} \textcolor{black}{con}\textcolor{blue}{cludes} \textcolor{blue}{by} \textcolor{black}{discuss}\textcolor{blue}{ing} \textcolor{blue}{the} \textcolor{blue}{importance} \textcolor{blue}{of} \textcolor{black}{proper} \textcolor{blue}{use} \textcolor{blue}{and} \textcolor{black}{monitoring} \textcolor{blue}{of} \textcolor{blue}{op}\textcolor{blue}{io}\textcolor{blue}{id} \textcolor{blue}{medic}\textcolor{black}{ations}\textcolor{blue}{,} \textcolor{blue}{as} \textcolor{blue}{well} \textcolor{blue}{as} \textcolor{blue}{the} \textcolor{black}{need} \textcolor{blue}{for} \textcolor{black}{continued} \textcolor{blue}{research} \textcolor{blue}{and} \textcolor{black}{development} \textcolor{blue}{of} \textcolor{blue}{new} \textcolor{blue}{treat}\textcolor{blue}{ments} \textcolor{black}{for} \textcolor{blue}{op}\textcolor{blue}{io}\textcolor{blue}{id} \textcolor{blue}{use} \textcolor{black}{dis}\textcolor{blue}{orders}\textcolor{blue}{.}
\end{tcolorbox}

\begin{tcolorbox}
    \textcolor{blue}{The} \textcolor{blue}{Rail}\textcolor{black}{road} \textcolor{blue}{Ret}\textcolor{blue}{irement} \textcolor{blue}{Board} \textcolor{black}{(}\textcolor{blue}{R}\textcolor{blue}{R}\textcolor{blue}{B}\textcolor{blue}{)} \textcolor{blue}{is} \textcolor{black}{an} \textcolor{blue}{independent} \textcolor{blue}{federal} \textcolor{blue}{ag}\textcolor{blue}{ency} \textcolor{blue}{that} \textcolor{black}{admin}\textcolor{blue}{ister}\textcolor{blue}{s} \textcolor{black}{ret}\textcolor{blue}{irement}\textcolor{blue}{,} \textcolor{black}{surv}\textcolor{blue}{iv}\textcolor{blue}{or}\textcolor{blue}{,} \textcolor{black}{dis}\textcolor{blue}{ability}\textcolor{blue}{,} \textcolor{blue}{un}\textcolor{black}{emp}\textcolor{blue}{loyment}\textcolor{blue}{,} \textcolor{blue}{and} \textcolor{blue}{sick}\textcolor{blue}{ness} \textcolor{black}{ins}\textcolor{blue}{urance} \textcolor{blue}{for} \textcolor{black}{rail}\textcolor{blue}{road} \textcolor{blue}{workers} \textcolor{blue}{and} \textcolor{blue}{their} \textcolor{blue}{families}\textcolor{black}{.} \textcolor{blue}{The} \textcolor{blue}{R}\textcolor{blue}{R}\textcolor{blue}{B} \textcolor{black}{covers} \textcolor{black}{workers} \textcolor{blue}{who} \textcolor{blue}{are} \textcolor{blue}{employed} \textcolor{blue}{by} \textcolor{blue}{rail}\textcolor{black}{ro}\textcolor{blue}{ads} \textcolor{black}{engaged} \textcolor{blue}{in} \textcolor{black}{inter}\textcolor{blue}{state} \textcolor{blue}{commerce} \textcolor{blue}{and} \textcolor{black}{related} \textcolor{black}{subs}\textcolor{blue}{idi}\textcolor{blue}{aries}\textcolor{blue}{,} \textcolor{black}{rail}\textcolor{blue}{road} \textcolor{blue}{associations}\textcolor{blue}{,} \textcolor{blue}{and} \textcolor{blue}{rail}\textcolor{black}{road} \textcolor{blue}{labor} \textcolor{blue}{organizations}\textcolor{blue}{.}
    
    \textcolor{blue}{The} \textcolor{black}{R}\textcolor{blue}{R}\textcolor{blue}{B} \textcolor{blue}{has} \textcolor{blue}{two} \textcolor{blue}{main} \textcolor{black}{programs}\textcolor{blue}{:} \textcolor{blue}{the} \textcolor{blue}{Rail}\textcolor{blue}{road} \textcolor{blue}{Ret}\textcolor{black}{irement} \textcolor{black}{Act} \textcolor{blue}{(}\textcolor{blue}{R}\textcolor{blue}{RA}\textcolor{blue}{)} \textcolor{black}{and} \textcolor{blue}{the} \textcolor{blue}{Rail}\textcolor{blue}{road} \textcolor{black}{Un}\textcolor{blue}{emp}\textcolor{blue}{loyment} \textcolor{blue}{In}\textcolor{blue}{sur}\textcolor{blue}{ance} \textcolor{black}{Act} \textcolor{blue}{(}\textcolor{blue}{R}\textcolor{black}{UI}\textcolor{blue}{A}\textcolor{blue}{).} \textcolor{blue}{The} \textcolor{blue}{R}\textcolor{blue}{RA} \textcolor{black}{author}\textcolor{blue}{izes} \textcolor{blue}{ret}\textcolor{blue}{irement}\textcolor{black}{,} \textcolor{black}{surv}\textcolor{blue}{iv}\textcolor{blue}{or}\textcolor{blue}{,} \textcolor{blue}{and} \textcolor{blue}{dis}\textcolor{black}{ability} \textcolor{blue}{benefits} \textcolor{blue}{for} \textcolor{blue}{rail}\textcolor{blue}{road} \textcolor{blue}{workers} \textcolor{black}{and} \textcolor{blue}{their} \textcolor{blue}{families}\textcolor{blue}{.} \textcolor{blue}{The} \textcolor{blue}{R}\textcolor{black}{UI}\textcolor{blue}{A} \textcolor{black}{provides} \textcolor{black}{un}\textcolor{blue}{emp}\textcolor{blue}{loyment} \textcolor{blue}{and} \textcolor{black}{sick}\textcolor{blue}{ness} \textcolor{blue}{benefits} \textcolor{blue}{for} \textcolor{blue}{rail}\textcolor{blue}{road} \textcolor{black}{workers}\textcolor{blue}{.}
    
    \textcolor{blue}{The} \textcolor{black}{number} \textcolor{blue}{of} \textcolor{blue}{rail}\textcolor{blue}{road} \textcolor{blue}{workers} \textcolor{black}{has} \textcolor{black}{been} \textcolor{blue}{decl}\textcolor{blue}{ining} \textcolor{blue}{since} \textcolor{blue}{the} \textcolor{blue}{}\textcolor{black}{1}\textcolor{blue}{9}\textcolor{blue}{5}\textcolor{blue}{0}\textcolor{blue}{s}\textcolor{blue}{,} \textcolor{black}{although} \textcolor{blue}{the} \textcolor{blue}{rate} \textcolor{blue}{of} \textcolor{black}{decl}\textcolor{blue}{ine} \textcolor{blue}{has} \textcolor{blue}{been} \textcolor{black}{irregular}\textcolor{blue}{.} \textcolor{blue}{In} \textcolor{blue}{recent} \textcolor{blue}{years}\textcolor{blue}{,} \textcolor{black}{rail}\textcolor{blue}{road} \textcolor{blue}{employ}\textcolor{blue}{ment} \textcolor{blue}{has} \textcolor{blue}{increased} \textcolor{black}{after} \textcolor{black}{reaching} \textcolor{blue}{an} \textcolor{blue}{all}\textcolor{blue}{-}\textcolor{blue}{time} \textcolor{blue}{low} \textcolor{black}{of} \textcolor{blue}{}\textcolor{blue}{2}\textcolor{black}{1}\textcolor{blue}{5}\textcolor{blue}{,}\textcolor{blue}{0}\textcolor{blue}{0}\textcolor{blue}{0} \textcolor{black}{workers} \textcolor{blue}{in} \textcolor{blue}{January} \textcolor{blue}{}\textcolor{blue}{2}\textcolor{blue}{0}\textcolor{black}{1}\textcolor{blue}{0}\textcolor{blue}{.} \textcolor{blue}{In} \textcolor{black}{April} \textcolor{blue}{}\textcolor{blue}{2}\textcolor{blue}{0}\textcolor{blue}{1}\textcolor{blue}{5}\textcolor{black}{,} \textcolor{blue}{rail}\textcolor{blue}{road} \textcolor{blue}{employ}\textcolor{blue}{ment} \textcolor{black}{pe}\textcolor{blue}{aked} \textcolor{blue}{at} \textcolor{blue}{}\textcolor{blue}{2}\textcolor{black}{5}\textcolor{black}{3}\textcolor{blue}{,}\textcolor{blue}{0}\textcolor{blue}{0}\textcolor{blue}{0} \textcolor{blue}{workers}\textcolor{black}{,} \textcolor{blue}{the} \textcolor{blue}{highest} \textcolor{blue}{level} \textcolor{blue}{since} \textcolor{black}{November} \textcolor{blue}{}\textcolor{blue}{1}\textcolor{blue}{9}\textcolor{blue}{9}\textcolor{blue}{9}\textcolor{black}{,} \textcolor{blue}{and} \textcolor{blue}{then} \textcolor{black}{decl}\textcolor{blue}{ined} \textcolor{black}{through} \textcolor{black}{F}\textcolor{blue}{Y}\textcolor{blue}{2}\textcolor{blue}{0}\textcolor{blue}{1}\textcolor{blue}{7}\textcolor{black}{,} \textcolor{blue}{falling} \textcolor{blue}{to} \textcolor{blue}{}\textcolor{blue}{2}\textcolor{blue}{2}\textcolor{black}{1}\textcolor{blue}{,}\textcolor{blue}{0}\textcolor{blue}{0}\textcolor{blue}{0} \textcolor{black}{workers}\textcolor{blue}{.}

    \textcolor{blue}{The} \textcolor{black}{R}\textcolor{blue}{R}\textcolor{blue}{B}\textcolor{blue}{'}\textcolor{blue}{s} \textcolor{black}{programs} \textcolor{blue}{are} \textcolor{blue}{designed} \textcolor{blue}{to} \textcolor{blue}{provide} \textcolor{black}{compreh}\textcolor{blue}{ensive} \textcolor{blue}{benefits} \textcolor{blue}{to} \textcolor{blue}{rail}\textcolor{blue}{road} \textcolor{black}{workers} \textcolor{blue}{and} \textcolor{blue}{their} \textcolor{blue}{families}\textcolor{blue}{.} \textcolor{blue}{The} \textcolor{black}{R}\textcolor{blue}{RA} \textcolor{blue}{and} \textcolor{black}{R}\textcolor{blue}{UI}\textcolor{blue}{A} \textcolor{black}{are} \textcolor{black}{important} \textcolor{blue}{components} \textcolor{blue}{of} \textcolor{blue}{the} \textcolor{blue}{rail}\textcolor{blue}{road} \textcolor{black}{industry}\textcolor{blue}{'}\textcolor{blue}{s} \textcolor{blue}{ret}\textcolor{blue}{irement} \textcolor{blue}{and} \textcolor{black}{benefits} \textcolor{blue}{system}\textcolor{blue}{.} \textcolor{blue}{The} \textcolor{blue}{R}\textcolor{blue}{R}\textcolor{black}{B}\textcolor{blue}{'}\textcolor{blue}{s} \textcolor{black}{efforts} \textcolor{blue}{to} \textcolor{blue}{maintain} \textcolor{black}{and} \textcolor{blue}{improve} \textcolor{blue}{these} \textcolor{black}{programs} \textcolor{blue}{are} \textcolor{black}{cru}\textcolor{blue}{cial} \textcolor{blue}{for} \textcolor{blue}{the} \textcolor{black}{well}\textcolor{blue}{-}\textcolor{blue}{be}\textcolor{blue}{ing} \textcolor{blue}{of} \textcolor{blue}{rail}\textcolor{black}{road} \textcolor{blue}{workers} \textcolor{blue}{and} \textcolor{blue}{their} \textcolor{blue}{families}\textcolor{blue}{.}
\end{tcolorbox}

\begin{tcolorbox}
    \textcolor{blue}{The} \textcolor{blue}{Cong}\textcolor{black}{r}\textcolor{blue}{essional} \textcolor{blue}{Gold} \textcolor{blue}{Medal} \textcolor{blue}{is} \textcolor{blue}{a} \textcolor{black}{pr}\textcolor{blue}{estig}\textcolor{blue}{ious} \textcolor{blue}{award} \textcolor{blue}{given} \textcolor{black}{by} \textcolor{blue}{the} \textcolor{blue}{United} \textcolor{blue}{States} \textcolor{blue}{Congress} \textcolor{black}{to} \textcolor{black}{individuals} \textcolor{blue}{and} \textcolor{blue}{groups} \textcolor{blue}{in} \textcolor{blue}{recognition} \textcolor{blue}{of} \textcolor{black}{their} \textcolor{blue}{distinguished} \textcolor{blue}{contributions}\textcolor{black}{,} \textcolor{blue}{achiev}\textcolor{blue}{ements}\textcolor{blue}{,} \textcolor{blue}{and} \textcolor{black}{services} \textcolor{blue}{to} \textcolor{blue}{the} \textcolor{black}{country}\textcolor{blue}{.} \textcolor{blue}{The} \textcolor{black}{tradition} \textcolor{blue}{of} \textcolor{blue}{award}\textcolor{blue}{ing} \textcolor{black}{gold} \textcolor{blue}{med}\textcolor{blue}{als} \textcolor{black}{dates} \textcolor{blue}{back} \textcolor{blue}{to} \textcolor{blue}{the} \textcolor{blue}{late} \textcolor{blue}{}\textcolor{black}{1}\textcolor{blue}{8}\textcolor{blue}{th} \textcolor{blue}{century}\textcolor{blue}{,} \textcolor{black}{and} \textcolor{black}{it} \textcolor{blue}{has} \textcolor{blue}{been} \textcolor{black}{used} \textcolor{blue}{to} \textcolor{blue}{honor} \textcolor{black}{a} \textcolor{blue}{wide} \textcolor{blue}{range} \textcolor{blue}{of} \textcolor{blue}{individuals}\textcolor{blue}{,} \textcolor{black}{including} \textcolor{blue}{military} \textcolor{blue}{leaders}\textcolor{blue}{,} \textcolor{black}{scient}\textcolor{blue}{ists}\textcolor{blue}{,} \textcolor{blue}{artists}\textcolor{blue}{,} \textcolor{blue}{and} \textcolor{black}{human}\textcolor{black}{it}\textcolor{blue}{ari}\textcolor{blue}{ans}\textcolor{blue}{.}
    
    \textcolor{blue}{The} \textcolor{blue}{first} \textcolor{black}{Cong}\textcolor{blue}{r}\textcolor{blue}{essional} \textcolor{blue}{Gold} \textcolor{blue}{Med}\textcolor{blue}{als} \textcolor{black}{were} \textcolor{blue}{issued} \textcolor{blue}{by} \textcolor{blue}{the} \textcolor{black}{Cont}\textcolor{blue}{inental} \textcolor{blue}{Congress} \textcolor{blue}{in} \textcolor{blue}{the} \textcolor{blue}{late} \textcolor{black}{}\textcolor{blue}{1}\textcolor{blue}{7}\textcolor{blue}{0}\textcolor{blue}{0}\textcolor{blue}{s}\textcolor{black}{,} \textcolor{blue}{and} \textcolor{black}{since} \textcolor{blue}{then}\textcolor{blue}{,} \textcolor{blue}{Congress} \textcolor{blue}{has} \textcolor{black}{awarded} \textcolor{black}{over} \textcolor{blue}{}\textcolor{blue}{2}\textcolor{blue}{,}\textcolor{blue}{0}\textcolor{blue}{0}\textcolor{black}{0} \textcolor{blue}{med}\textcolor{blue}{als} \textcolor{blue}{to} \textcolor{black}{various} \textcolor{blue}{individuals} \textcolor{blue}{and} \textcolor{blue}{groups}\textcolor{blue}{.} \textcolor{blue}{The} \textcolor{black}{award}\textcolor{black}{ing} \textcolor{blue}{of} \textcolor{blue}{the} \textcolor{black}{Cong}\textcolor{blue}{r}\textcolor{blue}{essional} \textcolor{blue}{Gold} \textcolor{blue}{Medal} \textcolor{blue}{is} \textcolor{black}{not} \textcolor{blue}{a} \textcolor{black}{permanent} \textcolor{blue}{stat}\textcolor{blue}{ut}\textcolor{blue}{ory} \textcolor{black}{provision}\textcolor{blue}{,} \textcolor{blue}{and} \textcolor{blue}{it} \textcolor{blue}{is} \textcolor{black}{typically} \textcolor{blue}{done} \textcolor{blue}{through} \textcolor{black}{special} \textcolor{blue}{legisl}\textcolor{blue}{ation}\textcolor{blue}{.}
    
    \textcolor{blue}{The} \textcolor{black}{process} \textcolor{blue}{of} \textcolor{blue}{award}\textcolor{blue}{ing} \textcolor{blue}{the} \textcolor{black}{Cong}\textcolor{blue}{r}\textcolor{blue}{essional} \textcolor{blue}{Gold} \textcolor{blue}{Medal} \textcolor{black}{involves} \textcolor{black}{several} \textcolor{blue}{steps}\textcolor{blue}{,} \textcolor{blue}{including} \textcolor{black}{the} \textcolor{blue}{introduction} \textcolor{blue}{of} \textcolor{blue}{legisl}\textcolor{blue}{ation}\textcolor{black}{,} \textcolor{blue}{the} \textcolor{black}{consideration} \textcolor{blue}{of} \textcolor{blue}{the} \textcolor{blue}{legisl}\textcolor{blue}{ation} \textcolor{black}{by} \textcolor{blue}{the} \textcolor{black}{relevant} \textcolor{blue}{comm}\textcolor{blue}{itte}\textcolor{blue}{es}\textcolor{blue}{,} \textcolor{blue}{and} \textcolor{black}{the} \textcolor{blue}{appro}\textcolor{blue}{val} \textcolor{blue}{of} \textcolor{blue}{the} \textcolor{blue}{legisl}\textcolor{black}{ation} \textcolor{blue}{by} \textcolor{blue}{both} \textcolor{blue}{the} \textcolor{black}{House} \textcolor{blue}{of} \textcolor{blue}{Representatives} \textcolor{blue}{and} \textcolor{black}{the} \textcolor{blue}{Senate}\textcolor{blue}{.} \textcolor{blue}{Once} \textcolor{black}{the} \textcolor{blue}{legisl}\textcolor{blue}{ation} \textcolor{blue}{is} \textcolor{blue}{approved}\textcolor{blue}{,} \textcolor{black}{the} \textcolor{blue}{Secretary} \textcolor{blue}{of} \textcolor{blue}{the} \textcolor{blue}{Tre}\textcolor{blue}{as}\textcolor{black}{ury} \textcolor{black}{is} \textcolor{blue}{responsible} \textcolor{blue}{for} \textcolor{black}{striking} \textcolor{blue}{the} \textcolor{blue}{medal}\textcolor{blue}{,} \textcolor{black}{which} \textcolor{blue}{is} \textcolor{blue}{then} \textcolor{black}{presented}
    
    \textcolor{blue}{The} \textcolor{black}{design} \textcolor{blue}{of} \textcolor{blue}{the} \textcolor{blue}{Cong}\textcolor{blue}{r}\textcolor{blue}{essional} \textcolor{black}{Gold} \textcolor{blue}{Medal} \textcolor{blue}{is} \textcolor{blue}{typically} \textcolor{black}{determined} \textcolor{blue}{by} \textcolor{blue}{the} \textcolor{blue}{Secretary} \textcolor{blue}{of} \textcolor{blue}{the} \textcolor{black}{Tre}\textcolor{blue}{as}\textcolor{blue}{ury}\textcolor{blue}{,} \textcolor{black}{in} \textcolor{blue}{consult}\textcolor{blue}{ation} \textcolor{blue}{with} \textcolor{blue}{the} \textcolor{black}{Cit}\textcolor{blue}{iz}\textcolor{blue}{ens} \textcolor{black}{Co}\textcolor{blue}{in}\textcolor{black}{age} \textcolor{blue}{Ad}\textcolor{blue}{vis}\textcolor{blue}{ory} \textcolor{blue}{Committee} \textcolor{blue}{and} \textcolor{black}{the} \textcolor{black}{Commission} \textcolor{black}{of} \textcolor{blue}{Fine} \textcolor{blue}{Arts}\textcolor{blue}{.} \textcolor{blue}{The} \textcolor{blue}{medal} \textcolor{black}{typically} \textcolor{blue}{features} \textcolor{blue}{a} \textcolor{black}{portrait} \textcolor{blue}{of} \textcolor{blue}{the} \textcolor{blue}{recip}\textcolor{blue}{ient}\textcolor{black}{,} \textcolor{black}{as} \textcolor{blue}{well} \textcolor{blue}{as} \textcolor{blue}{ins}\textcolor{black}{cri}\textcolor{blue}{ptions} \textcolor{blue}{and} \textcolor{black}{symbols} \textcolor{blue}{that} \textcolor{blue}{reflect} \textcolor{blue}{the} \textcolor{black}{recip}\textcolor{blue}{ient}\textcolor{blue}{'}\textcolor{blue}{s} \textcolor{blue}{achiev}\textcolor{blue}{ements} \textcolor{black}{and} \textcolor{blue}{contributions}\textcolor{blue}{.}
    
    \textcolor{black}{The} \textcolor{blue}{Cong}\textcolor{blue}{r}\textcolor{blue}{essional} \textcolor{blue}{Gold} \textcolor{blue}{Medal} \textcolor{black}{is} \textcolor{blue}{considered} \textcolor{blue}{one} \textcolor{blue}{of} \textcolor{blue}{the} \textcolor{black}{highest} \textcolor{blue}{civil}\textcolor{blue}{ian} \textcolor{blue}{hon}\textcolor{blue}{ors} \textcolor{blue}{in} \textcolor{black}{the} \textcolor{blue}{United} \textcolor{blue}{States}\textcolor{blue}{,} \textcolor{blue}{and} \textcolor{blue}{it} \textcolor{black}{is} \textcolor{blue}{often} \textcolor{black}{given} \textcolor{blue}{to} \textcolor{blue}{individuals} \textcolor{blue}{who} \textcolor{blue}{have} \textcolor{black}{made} \textcolor{blue}{significant} \textcolor{blue}{contributions} \textcolor{blue}{to} \textcolor{black}{their} \textcolor{black}{field} \textcolor{blue}{or} \textcolor{blue}{to} \textcolor{black}{the} \textcolor{blue}{country} \textcolor{black}{as} \textcolor{blue}{a} \textcolor{blue}{whole}\textcolor{blue}{.} \textcolor{blue}{The} \textcolor{blue}{award} \textcolor{black}{has} \textcolor{blue}{been} \textcolor{blue}{given} \textcolor{blue}{to} \textcolor{black}{a} \textcolor{blue}{wide} \textcolor{blue}{range} \textcolor{blue}{of} \textcolor{blue}{individuals}\textcolor{blue}{,} \textcolor{black}{including} \textcolor{black}{military} \textcolor{blue}{hero}\textcolor{blue}{es}\textcolor{blue}{,} \textcolor{black}{civil} \textcolor{blue}{rights} \textcolor{blue}{leaders}\textcolor{blue}{,} \textcolor{blue}{and} \textcolor{black}{artists}\textcolor{blue}{.}
    
    \textcolor{black}{In} \textcolor{blue}{recent} \textcolor{blue}{years}\textcolor{blue}{,} \textcolor{blue}{the} \textcolor{black}{number} \textcolor{blue}{of} \textcolor{blue}{Cong}\textcolor{blue}{r}\textcolor{blue}{essional} \textcolor{blue}{Gold} \textcolor{black}{Med}\textcolor{blue}{als} \textcolor{blue}{awarded} \textcolor{black}{has} \textcolor{blue}{increased}\textcolor{blue}{,} \textcolor{black}{with} \textcolor{black}{over} \textcolor{blue}{}\textcolor{blue}{5}\textcolor{blue}{0} \textcolor{black}{b}\textcolor{blue}{ills} \textcolor{blue}{introduced} \textcolor{black}{in} \textcolor{blue}{the} \textcolor{blue}{}\textcolor{blue}{1}\textcolor{blue}{1}\textcolor{black}{3}\textcolor{blue}{th} \textcolor{blue}{Congress} \textcolor{black}{alone}\textcolor{blue}{.} \textcolor{blue}{The} \textcolor{blue}{award} \textcolor{blue}{has} \textcolor{black}{also} \textcolor{blue}{been} \textcolor{blue}{given} \textcolor{blue}{to} \textcolor{black}{a} \textcolor{black}{growing} \textcolor{blue}{number} \textcolor{blue}{of} \textcolor{blue}{groups}\textcolor{blue}{,} \textcolor{blue}{including} \textcolor{black}{military} \textcolor{black}{units}\textcolor{blue}{,} \textcolor{blue}{organizations}\textcolor{blue}{,} \textcolor{blue}{and} \textcolor{black}{even} \textcolor{black}{entire} \textcolor{blue}{cities}\textcolor{blue}{.}

\end{tcolorbox}

\begin{tcolorbox}
    \textcolor{blue}{The} \textcolor{blue}{report} \textcolor{black}{provides} \textcolor{blue}{an} \textcolor{blue}{over}\textcolor{blue}{view} \textcolor{blue}{of} \textcolor{blue}{the} \textcolor{black}{annual} \textcolor{blue}{appropri}\textcolor{blue}{ations} \textcolor{blue}{for} \textcolor{blue}{the} \textcolor{blue}{Department} \textcolor{black}{of} \textcolor{blue}{Hom}\textcolor{blue}{eland} \textcolor{blue}{Security} \textcolor{blue}{(}\textcolor{blue}{D}\textcolor{black}{HS}\textcolor{blue}{)} \textcolor{blue}{for} \textcolor{blue}{F}\textcolor{blue}{Y}\textcolor{blue}{2}\textcolor{black}{0}\textcolor{blue}{1}\textcolor{blue}{9}\textcolor{blue}{.} \textcolor{blue}{It} \textcolor{black}{comp}\textcolor{blue}{ares} \textcolor{blue}{the} \textcolor{black}{en}\textcolor{blue}{act}\textcolor{blue}{ed} \textcolor{blue}{F}\textcolor{blue}{Y}\textcolor{blue}{2}\textcolor{black}{0}\textcolor{blue}{1}\textcolor{blue}{8} \textcolor{blue}{appropri}\textcolor{blue}{ations} \textcolor{blue}{for} \textcolor{black}{D}\textcolor{blue}{HS}\textcolor{blue}{,} \textcolor{blue}{the} \textcolor{black}{Trump} \textcolor{blue}{Administration}\textcolor{blue}{'}\textcolor{blue}{s} \textcolor{blue}{F}\textcolor{black}{Y}\textcolor{blue}{2}\textcolor{blue}{0}\textcolor{blue}{1}\textcolor{blue}{9} \textcolor{blue}{budget} \textcolor{black}{request}\textcolor{blue}{,} \textcolor{blue}{and} \textcolor{blue}{the} \textcolor{black}{appropri}\textcolor{blue}{ations} \textcolor{blue}{measures} \textcolor{black}{developed} \textcolor{blue}{and} \textcolor{black}{considered} \textcolor{blue}{by} \textcolor{blue}{Congress} \textcolor{blue}{in} \textcolor{black}{response} \textcolor{blue}{to} \textcolor{blue}{the} \textcolor{black}{request}\textcolor{blue}{.} \textcolor{blue}{The} \textcolor{blue}{report} \textcolor{black}{ident}\textcolor{blue}{ifies} \textcolor{blue}{additional} \textcolor{black}{inform}\textcolor{blue}{ational} \textcolor{blue}{resources}\textcolor{blue}{,} \textcolor{black}{reports}\textcolor{blue}{,} \textcolor{blue}{and} \textcolor{black}{policy} \textcolor{blue}{exper}\textcolor{blue}{ts} \textcolor{blue}{that} \textcolor{black}{can} \textcolor{blue}{provide} \textcolor{black}{further} \textcolor{blue}{information} \textcolor{blue}{on} \textcolor{blue}{D}\textcolor{blue}{HS} \textcolor{black}{appropri}\textcolor{blue}{ations}\textcolor{blue}{.}
    
    \textcolor{blue}{The} \textcolor{blue}{report} \textcolor{black}{explains} \textcolor{black}{several} \textcolor{black}{special}\textcolor{blue}{ized} \textcolor{blue}{budget}\textcolor{blue}{ary} \textcolor{black}{concepts}\textcolor{blue}{,} \textcolor{blue}{including} \textcolor{blue}{budget} \textcolor{black}{authority}\textcolor{blue}{,} \textcolor{black}{oblig}\textcolor{blue}{ations}\textcolor{blue}{,} \textcolor{blue}{out}\textcolor{black}{l}\textcolor{blue}{ays}\textcolor{blue}{,} \textcolor{blue}{dis}\textcolor{blue}{cret}\textcolor{blue}{ion}\textcolor{black}{ary} \textcolor{black}{and} \textcolor{black}{mand}\textcolor{blue}{atory} \textcolor{blue}{sp}\textcolor{blue}{ending}\textcolor{blue}{,} \textcolor{black}{offset}\textcolor{blue}{ting} \textcolor{black}{collections}\textcolor{blue}{,} \textcolor{blue}{alloc}\textcolor{black}{ations}\textcolor{blue}{,} \textcolor{blue}{and} \textcolor{black}{adjust}\textcolor{blue}{ments} \textcolor{blue}{to} \textcolor{blue}{the} \textcolor{black}{dis}\textcolor{blue}{cret}\textcolor{blue}{ion}\textcolor{blue}{ary} \textcolor{blue}{sp}\textcolor{blue}{ending} \textcolor{black}{caps} \textcolor{black}{under} \textcolor{blue}{the} \textcolor{black}{Bud}\textcolor{blue}{get} \textcolor{blue}{Control} \textcolor{blue}{Act} \textcolor{blue}{(}\textcolor{black}{BC}\textcolor{blue}{A}\textcolor{blue}{).} \textcolor{blue}{It} \textcolor{blue}{also} \textcolor{black}{provides} \textcolor{blue}{a} \textcolor{blue}{detailed} \textcolor{blue}{analysis} \textcolor{blue}{of} \textcolor{blue}{the} \textcolor{black}{appropri}\textcolor{blue}{ations} \textcolor{blue}{process} \textcolor{blue}{for} \textcolor{blue}{D}\textcolor{blue}{HS}\textcolor{black}{,} \textcolor{blue}{including} \textcolor{blue}{the} \textcolor{black}{various} \textcolor{blue}{comm}\textcolor{blue}{itte}\textcolor{blue}{es} \textcolor{blue}{and} \textcolor{black}{sub}\textcolor{blue}{comm}\textcolor{blue}{itte}\textcolor{blue}{es} \textcolor{blue}{involved}\textcolor{blue}{,} \textcolor{black}{and} \textcolor{blue}{the} \textcolor{blue}{role} \textcolor{blue}{of} \textcolor{blue}{the} \textcolor{black}{Cong}\textcolor{blue}{r}\textcolor{blue}{essional} \textcolor{blue}{Bud}\textcolor{blue}{get} \textcolor{blue}{Office} \textcolor{black}{(}\textcolor{blue}{C}\textcolor{blue}{BO}\textcolor{blue}{)} \textcolor{blue}{and} \textcolor{blue}{the} \textcolor{black}{Government} \textcolor{blue}{Account}\textcolor{blue}{ability} \textcolor{blue}{Office} \textcolor{blue}{(}\textcolor{blue}{GA}\textcolor{black}{O}\textcolor{blue}{).}
    
    \textcolor{black}{The} \textcolor{blue}{report} \textcolor{black}{highlight}\textcolor{blue}{s} \textcolor{blue}{the} \textcolor{black}{key} \textcolor{blue}{issues} \textcolor{black}{and} \textcolor{black}{deb}\textcolor{blue}{ates} \textcolor{blue}{surrounding} \textcolor{black}{D}\textcolor{blue}{HS} \textcolor{blue}{appropri}\textcolor{blue}{ations}\textcolor{blue}{,} \textcolor{blue}{including} \textcolor{black}{fund}\textcolor{blue}{ing} \textcolor{blue}{for} \textcolor{blue}{border} \textcolor{blue}{security}\textcolor{blue}{,} \textcolor{black}{imm}\textcolor{blue}{igration} \textcolor{blue}{enfor}\textcolor{blue}{cement}\textcolor{blue}{,} \textcolor{black}{cy}\textcolor{blue}{ber}\textcolor{blue}{security}\textcolor{blue}{,} \textcolor{blue}{and} \textcolor{blue}{dis}\textcolor{black}{aster} \textcolor{blue}{response}\textcolor{blue}{.} \textcolor{blue}{It} \textcolor{blue}{also} \textcolor{black}{discuss}\textcolor{blue}{es} \textcolor{blue}{the} \textcolor{blue}{impact} \textcolor{blue}{of} \textcolor{blue}{the} \textcolor{black}{B}\textcolor{blue}{CA} \textcolor{blue}{on} \textcolor{black}{D}\textcolor{blue}{HS} \textcolor{blue}{appropri}\textcolor{black}{ations} \textcolor{blue}{and} \textcolor{blue}{the} \textcolor{black}{potential} \textcolor{blue}{for} \textcolor{blue}{future} \textcolor{black}{changes} \textcolor{blue}{to} \textcolor{blue}{the} \textcolor{black}{sp}\textcolor{blue}{ending} \textcolor{blue}{caps}\textcolor{blue}{.}
    
    \textcolor{blue}{Over}\textcolor{black}{all}\textcolor{blue}{,} \textcolor{blue}{the} \textcolor{blue}{report} \textcolor{blue}{provides} \textcolor{blue}{a} \textcolor{black}{compreh}\textcolor{blue}{ensive} \textcolor{blue}{analysis} \textcolor{blue}{of} \textcolor{blue}{the} \textcolor{black}{annual} \textcolor{blue}{appropri}\textcolor{blue}{ations} \textcolor{blue}{for} \textcolor{blue}{D}\textcolor{blue}{HS} \textcolor{black}{and} \textcolor{blue}{the} \textcolor{black}{factors} \textcolor{blue}{that} \textcolor{blue}{influence} \textcolor{blue}{the} \textcolor{black}{allocation} \textcolor{blue}{of} \textcolor{blue}{fund}\textcolor{blue}{ing}\textcolor{blue}{.} \textcolor{blue}{It} \textcolor{black}{is} \textcolor{blue}{a} \textcolor{blue}{valuable} \textcolor{blue}{resource} \textcolor{blue}{for} \textcolor{blue}{polic}\textcolor{black}{ym}\textcolor{blue}{akers}\textcolor{blue}{,} \textcolor{black}{anal}\textcolor{blue}{yst}\textcolor{blue}{s}\textcolor{blue}{,} \textcolor{blue}{and} \textcolor{blue}{st}\textcolor{black}{ake}\textcolor{blue}{hold}\textcolor{blue}{ers} \textcolor{blue}{interested} \textcolor{blue}{in} \textcolor{blue}{understanding} \textcolor{black}{the} \textcolor{blue}{complex}\textcolor{blue}{ities} \textcolor{blue}{of} \textcolor{blue}{D}\textcolor{blue}{HS} \textcolor{black}{appropri}\textcolor{blue}{ations} \textcolor{blue}{and} \textcolor{blue}{the} \textcolor{black}{challeng}\textcolor{blue}{es} \textcolor{blue}{facing} \textcolor{blue}{the} \textcolor{black}{department} \textcolor{blue}{in} \textcolor{blue}{the} \textcolor{black}{coming} \textcolor{blue}{years}\textcolor{blue}{.}

\end{tcolorbox}


\end{document}